\documentclass{article}

\usepackage{microtype}
\usepackage{graphicx}
\usepackage{subcaption}
\usepackage{booktabs} % for professional tables
\usepackage{hyperref}

\usepackage[preprint]{icml2026}

\usepackage{amsmath}
\usepackage{amssymb}
\usepackage{mathtools}
\usepackage{amsthm}

\usepackage{multirow}
\usepackage[table]{xcolor}

\usepackage{tcolorbox}

\usepackage[capitalize,noabbrev]{cleveref}

\theoremstyle{plain}
\newtheorem{theorem}{Theorem}[section]
\newtheorem{proposition}[theorem]{Proposition}

\newtheorem{corollary}[theorem]{Corollary}
\theoremstyle{definition}

\theoremstyle{remark}
\newtheorem{remark}[theorem]{Remark}
\icmlsetsymbol{it}{$\dagger$}
\usepackage[textsize=tiny]{todonotes}

\icmltitlerunning{Self-Hinting Language Models Enhance Reinforcement Learning}

\begin{document}

\twocolumn[
  \icmltitle{{Self-Hinting Language Models Enhance Reinforcement Learning}
}

  % It is OKAY to include author information, even for blind submissions: the
  % style file will automatically remove it for you unless you've provided
  % the [accepted] option to the icml2026 package.

  % List of affiliations: The first argument should be a (short) identifier you
  % will use later to specify author affiliations Academic affiliations
  % should list Department, University, City, Region, Country Industry
  % affiliations should list Company, City, Region, Country

  % You can specify symbols, otherwise they are numbered in order. Ideally, you
  % should not use this facility. Affiliations will be numbered in order of
  % appearance and this is the preferred way.
  \icmlsetsymbol{equal}{*}

  \begin{icmlauthorlist}
    \icmlauthor{Baohao Liao}{equal,it,comp,yyy}
    \icmlauthor{Hanze Dong}{equal,comp}
    \icmlauthor{Xinxing Xu}{comp}
        \icmlauthor{Christof Monz}{yyy}
            \icmlauthor{Jiang Bian}{comp}
  \end{icmlauthorlist}

  \icmlaffiliation{comp}{Microsoft Research}
  \icmlaffiliation{yyy}{Language Technology Lab, University of Amsterdam}

  \icmlcorrespondingauthor{Hanze Dong}{hanzedong@microsoft.com}

  % You may provide any keywords that you find helpful for describing your
  % paper; these are used to populate the "keywords" metadata in the PDF but
  % will not be shown in the document
  \icmlkeywords{Machine Learning, ICML}

  \vskip 0.3in
]

% this must go after the closing bracket ] following \twocolumn[ ...

% This command actually creates the footnote in the first column listing the
% affiliations and the copyright notice. The command takes one argument, which
% is text to display at the start of the footnote. The \icmlEqualContribution
% command is standard text for equal contribution. Remove it (just {}) if you
% do not need this facility.

% Use ONE of the following lines. DO NOT remove the command.
% If you have no special notice, KEEP empty braces:
%\printAffiliationsAndNotice{}  % no special notice (required even if empty)
% Or, if applicable, use the standard equal contribution text:
\printAffiliationsAndNotice{\icmlEqualContribution $^\dagger$This work was done during an internship at Microsoft.}

\begin{abstract}
Group Relative Policy Optimization (GRPO) has recently emerged as a practical recipe for aligning large language models with verifiable objectives. However, under sparse terminal rewards, GRPO often stalls because rollouts within a group frequently receive identical rewards, causing relative advantages to collapse and updates to vanish. 
We propose \emph{self-hint aligned GRPO with privileged supervision} (\textsc{SAGE}),
an on-policy reinforcement learning framework that injects privileged hints during training to reshape the rollout distribution under the \emph{same} terminal verifier reward.
For each prompt $x$, the model samples a compact hint $h$ (e.g., a plan or decomposition) and then generates a solution $\tau$ conditioned on $(x,h)$.
Crucially, the task reward $R(x,\tau)$ is unchanged; hints only increase within-group outcome diversity under finite sampling, preventing GRPO advantages from collapsing under sparse rewards.
At test time, we set $h=\varnothing$ and deploy the no-hint policy without any privileged information.
Moreover, sampling diverse self-hints serves as an adaptive curriculum that tracks the learner's bottlenecks more effectively than fixed hints from an initial policy or a stronger external model. Experiments over 6 benchmarks with 3 LLMs show that SAGE consistently outperforms GRPO, on average +2.0 on Llama-3.2-3B-Instruct, +1.2 on Qwen2.5-7B-Instruct and +1.3 on Qwen3-4B-Instruct. The code is available at \href{https://github.com/BaohaoLiao/SAGE}{https://github.com/BaohaoLiao/SAGE}.
\end{abstract}

\begin{figure}[t]
    \centering
    \includegraphics[width=0.98\linewidth]{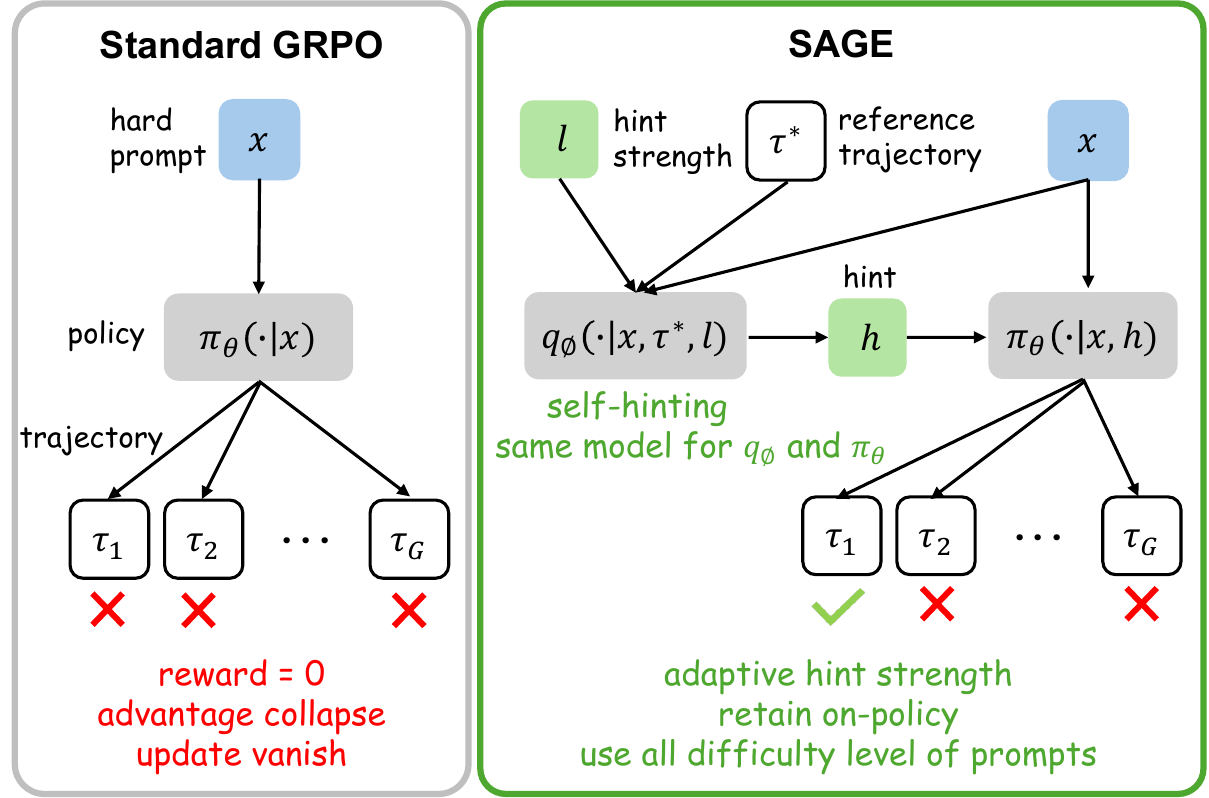}
    \caption{An overview of our proposed method, SAGE. When an LLM can't sample any correct trajectory for a prompt, the LLM \textbf{self-generates} hint from the reference solution of the prompt. The hint is then used together with the difficult prompt as input to the LLM, avoiding advantage collapse and ensuring the sampling of correct trajectories to update the policy model.}
    \label{fig:overview}
\end{figure}

\section{Introduction}

Reinforcement learning (RL) has become a core tool for training and aligning large language models (LLMs), particularly when supervision is most naturally expressed via verifiable objectives such as exact-match correctness, unit tests, or automated checkers \citep{ouyang2022training,schulman2017proximal,deepseekai2025deepseekr1incentivizingreasoningcapability}. In this setting, the objective is straightforward: maximization the expected reward over prompts, yet optimization can be fragile: with finite sampling, policy-gradient estimators may exhibit high variance and can even become degenerate on hard prompts.

A salient example arises with Group Relative Policy Optimization (GRPO) \citep{shao2024deepseekmath} under sparse terminal rewards. GRPO centers (and often standardizes) rewards within each rollout group, relying on within-group outcome differences to produce a nonzero update. With a $0/1$ verifier, difficult prompts frequently yield groups where all rollouts receive the same reward (typically all zeros). In that case, the group-centered advantages collapse and the minibatch policy-gradient estimate becomes identically zero. Importantly, this is a \textbf{finite-sample pathology}: the underlying expected objective needs not be flat, but the estimator provides no learning signal for many prompts.

Existing remedies largely modify data collection. A common baseline is to skip uninformative updates (e.g., degenerate groups) and resample prompts, which improves performance but implicitly biases training toward easier prompts \citep{yu2025dapo,xiong2025minimalist}. More systematic approaches include adaptive sampling or curriculum-style scheduling to allocate more rollouts to difficult prompts \citep{yao2025optimizing,xiong2025reinforce,li2025knapsack,zhang2025improving}, as well as leveraging offline data or externally generated candidates (e.g., from stronger models) to bootstrap learning \citep{zhang2025stephint,yan2025learning,zhang2025scaf}. While effective, these strategies can either skew the training distribution or introduce context/distribution mismatch that must be handled carefully.

We propose \textsc{SAGE} (Self-hint Aligned GRPO with Privileged Supervision), a complementary approach based on \emph{privileged hinting}. During training, we provide an additional hint $h$, a lossy compression of a reference solution $\tau^\star$, and roll out from the hint-conditioned policy $\pi_\theta(\cdot\mid x,h)$. Hints only reshape the rollout distribution to increase the probability of observing mixed outcomes within a finite group.  At test time, we deploy the no-hint policy. We refer to hints generated by the policy itself as self-hints, and to the procedure of generating such hints as self-hinting.

This degeneracy can be made explicit. Let $p_\theta(x)$ be the no-hint success probability and $G$ the group size. The probability that a rollout group contains mixed outcomes is
$
1-(1-p_\theta(x))^G-p_\theta(x)^G \approx Gp_\theta(x),
$
so updates vanish whenever $Gp_\theta(x)\ll 1$. Hinting is useful precisely when it increases the effective success probability so that mixed-outcome groups become common for the same $G$.

\textbf{Contributions.}
(i) We introduce {SAGE} as shown in Figure \ref{fig:overview}, an on-policy RL framework that conditions rollouts on privileged hints during training while keeping the task reward unchanged and removing hints at test time.
(ii) We develop a policy-dependent hint-strength scheduler, that activates hints only when within-group rewards collapse, yielding an automatic curriculum.
(iii) We propose an online self-hinting scheme that periodically refreshes the hint distribution during training to maintain calibration to the learner, avoiding overly weak/overly strong fixed hints.
(iv) We provide analysis that characterizes GRPO collapse as a gate-opening probability under Bernoulli rewards and empirically validate that \textsc{SAGE} improves sample efficiency and final accuracy on challenging reasoning benchmarks.

\begin{figure}[t]
    \centering
    \includegraphics[width=0.98\linewidth]{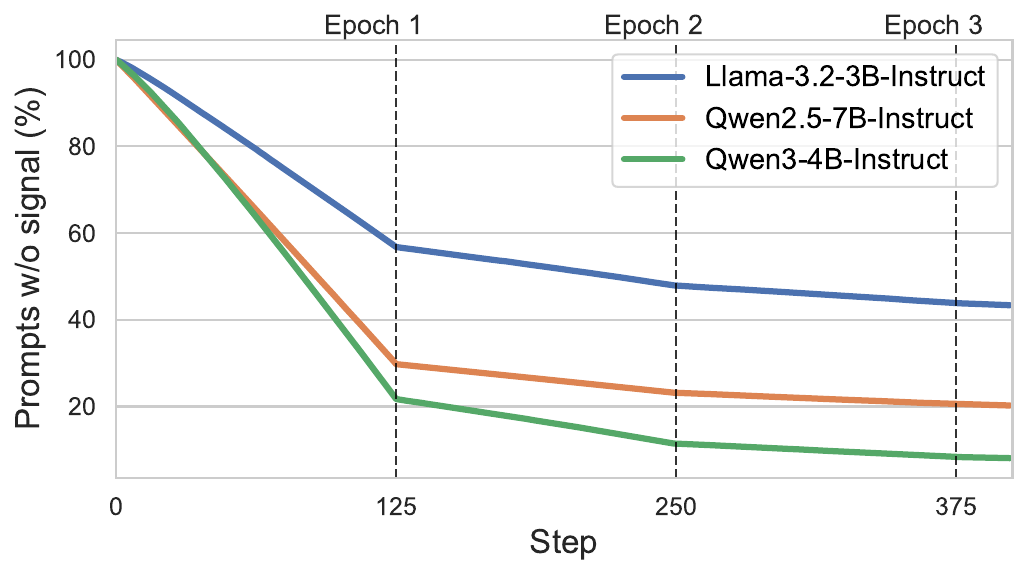}
    \caption{The percentage of prompts whose correct trajectories have NEVER been sampled w.r.t. the training step. Here we train on 64k prompts, and sample 8 traces per prompt per step. A large number of prompts is wasted during RL, especially for a weaker LLM, since they don't offer any signal for training.}
    \label{fig:pass_k_0_during_training}
\end{figure}

\section{RL with Privileged Hinting}
\label{sec:hint_grpo}

Vanilla GRPO works well when a prompt $x$ yields occasional positive rollouts.
In hard regimes, groups often receive identical rewards, collapsing within-group advantages and stalling learning.
We address this failure mode by injecting \emph{privileged hints} during training that keep the reward unchanged while reshaping the rollout distribution to surface informative trajectories under finite sampling.

\subsection{Setup and the GRPO stall}
Let $x \sim \mathcal D$ be a prompt.
A policy $\pi_\theta$ generates a trajectory $\tau=(y_1,\dots,y_T)$, written as $\tau\sim\pi_\theta(\cdot\mid x)$.
We use a binary reward
\(
R(x,\tau)\in\{0,1\}.
\)
Define the success probability
\begin{equation}
p_\theta(x)=\Pr_{\tau\sim\pi_\theta(\cdot\mid x)}[R(x,\tau)=1].
\end{equation}
GRPO implementations often standardize groupwise advantages by the within-group standard deviation.
For a group of $G$ rollouts $\{\tau_i\}_{i=1}^G$, let
\begin{equation*}
R_i=R(x,\tau_i),\, \bar R=\tfrac{1}{G}\sum_{i=1}^G R_i,
\,
s^2=\tfrac{1}{G}\sum_{i=1}^G (R_i-\bar R)^2,
\end{equation*}
and define standardized advantages
\begin{equation}
 A_i=\frac{R_i-\bar R}{s+\epsilon},
\label{eq:std_adv}
\end{equation}
where $\epsilon\ge 0$ is a numerical stabilizer and $R_i\in\{0,1\}$.
When $p_\theta(x)$ is tiny, a group is often all-zero and $A_i=0$ for all $i$. The chance of a non-degenerate group is
\begin{equation*}
1-(1-p_\theta(x))^G-p_\theta(x)^G \approx 1-(1-p_\theta(x))^G \approx Gp_\theta(x),
\label{eq:rare_signal_simple}
\end{equation*}
so training stalls whenever $Gp_\theta(x)\ll 1$ on most prompts. Figure~\ref{fig:pass_k_0_during_training} illustrates this phenomenon in practice: for many hard prompts, correct trajectories are never sampled for a long stretch of training, yielding no learning signal.

\subsection{Privileged hinting as sampling}
When a reference trajectory $\tau^\star$ is available during training, we generate a hint $h$ as a lossy compression of $\tau^\star$.
The hint is appended to the prompt as additional context.

\textbf{Hint strength and the no-hint case.}
We control hint informativeness with a discrete strength level $\ell\in\{0,1,\dots,L\}$, where larger $\ell$ indicates more information about the reference trajectory $\tau^\star$.
We sample
\begin{equation}
\ell \sim p(\ell),
\qquad
h \sim q(h\mid x,\tau^\star,\ell),
\label{eq:hint_sampling_simple}
\end{equation}
where $\ell=0$ corresponds to the no-hint setting and
$q(h\mid x,\tau^\star,0)=\delta_{\varnothing}(h)$ (i.e., $h=\varnothing$ deterministically).

\textbf{Policy-dependent scheduling of $\ell$.}
Hints should be used only when a prompt provides no learning signal.
We let the sampling of $\ell$ depend on the policy through a simple statistic, such as a collapse indicator
$
c(x)=\mathbb I\!\left[\mathrm{Var}\big(\{R(x,\tau_i)\}_{i=1}^G\big)=0\right],
$
computed from a small probe group under the policy model $\pi_{\theta}$.
A minimal scheduler is
\begin{equation}
p(\ell\mid x)=
\begin{cases}
\delta_0, & c(x)=0,\\
 p(\ell), & c(x)=1,
\end{cases}
\label{eq:ell_scheduler_simple}
\end{equation}
so $\ell>0$ is activated only when the group collapses.

With a hint, we sample from $\pi_\theta(\cdot\mid x,h)$.
The success rate increases:
$
p_\theta^{(\ell)}(x)=\Pr_{\tau\sim\pi_\theta(\cdot\mid x,h)}[R(x,\tau)=1],
h\sim q(\cdot\mid x,\tau^\star,\ell).
$
Hinting is useful when it raises $p_\theta^{(\ell)}(x)$ enough that $Gp_\theta^{(\ell)}(x)$ is no longer tiny, so non-degenerate groups become common and RL receives updates.

\subsection{GRPO with hints and the final loss}
Given $x$, sample $\ell$ and $h$, then draw a group of rollouts
\begin{equation*}
\tau_i \sim \pi_\theta(\cdot\mid x,h),
\qquad
R_i=R(x,\tau_i),
\quad i=1,\dots,G.
\end{equation*}
Compute $A_i$ with Eq.~\eqref{eq:std_adv}.
The loss conditioned on hints is
\begin{equation}
\begin{aligned}
\mathcal L(\theta)=
-\mathbb E\!\left[
\tfrac{1}{G}\sum_{i=1}^G
A_i \sum_{t=1}^{T_i}\log \pi_\theta(y_{i,t}\mid x,h,y_{i,<t})
\right]
\\+\beta\,\mathbb E\!\left[\mathrm{KL}\big(\pi_\theta(\cdot\mid x,h)\,\|\,\pi_{\mathrm{ref}}(\cdot\mid x,h)\big)\right],
\end{aligned}
\label{eq:final_loss_simple}
\end{equation}
where the expectation is over $x\sim\mathcal D$, the hint sampling from Eq.~\eqref{eq:hint_sampling_simple}, and rollouts from $\pi_\theta(\cdot\mid x,h)$.
At test time we set $\ell=0$, $h=\varnothing$ and run the  policy
$\pi_\theta(\cdot\mid x,\varnothing)\equiv\pi_\theta(\cdot\mid x)$.

\textbf{Summary.}
Sparse rewards can cause GRPO to stall because, for many prompts $x$, finite groups contain no positive samples and advantages collapse.
Privileged hinting fixes this by changing the rollout distribution for such prompts while keeping the reward unchanged.
A policy-dependent scheduler activates hints only when groups collapse, yielding an automatic curriculum.
Training remains on-policy since rollouts are drawn from $\pi_\theta(\cdot\mid x,h)$.
Deployment uses $\ell=0$ and requires no hints or privileged information.

\section{Analysis}

\subsection{Standardized GRPO as a gated update objective}
\label{sec:analysis_sage}

Fix a context $(x,h)$ and draw $G$ rollouts with rewards $R_i\in\{0,1\}$.
Let $\bar R=\tfrac{1}{G}\sum_i R_i$ and $s^2=\tfrac{1}{G}\sum_i (R_i-\bar R)^2$.
Standardized GRPO uses $
A_i=\frac{R_i-\bar R}{s+\epsilon}.$

\begin{corollary}[Signal energy equals a gate probability]
\label{corl:gate_energy}
Define the advantage energy
$
E \coloneqq \tfrac{1}{G}\sum_{i=1}^G A_i^2.
$
If $\epsilon>0$, 
\begin{equation}
E = \frac{s^2}{(s+\epsilon)^2}\in[0,1],
\label{eq:energy_eps}
\end{equation}
which is monotone in $s$ and still collapses to $0$ when $s=0$.
\end{corollary}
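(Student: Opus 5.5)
The plan is a direct computation from the definitions of $\bar R$, $s^2$ and the standardized advantages in Eq.~\eqref{eq:std_adv}. Since the denominator $s+\epsilon$ does not depend on $i$, it factors out of the average:
\[
E=\frac{1}{G}\sum_{i=1}^G \frac{(R_i-\bar R)^2}{(s+\epsilon)^2}
=\frac{1}{(s+\epsilon)^2}\cdot \frac{1}{G}\sum_{i=1}^G (R_i-\bar R)^2
=\frac{s^2}{(s+\epsilon)^2}.
\]
This identity needs $s+\epsilon>0$. The hypothesis $\epsilon>0$ guarantees it even in the degenerate case $s=0$. That is exactly why the statement assumes $\epsilon>0$; with $\epsilon=0$ the ratio is undefined when $s=0$.

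For the range, $s\ge 0$ (it is the square root of an average of squares) and $\epsilon>0$ give $0\le s< s+\epsilon$. Hence $0\le s/(s+\epsilon)<1$, and squaring gives $E\in[0,1)\subset[0,1]$.

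For monotonicity, write $E=f(s)$ with $f(u)=\bigl(u/(u+\epsilon)\bigr)^2$ on $u\ge 0$. Then $u/(u+\epsilon)=1-\epsilon/(u+\epsilon)$ is strictly increasing in $u$ and nonnegative, so its square is also increasing. Equivalently, $f'(u)=2u\epsilon/(u+\epsilon)^3\ge 0$. Finally, $s=0$ gives $E=0$. This happens precisely when all $R_i$ are equal, i.e.\ the group is all-zero or all-one, which is the collapse case.

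There is no real obstacle here. The only point needing care is the role of $\epsilon>0$ in keeping the expression well defined at $s=0$. Optionally, I would also note that for binary rewards $s^2=\bar R(1-\bar R)$, so $E$ can be written as a function of the fraction of successes in the group. This links the corollary to the gate-opening probability discussed earlier.
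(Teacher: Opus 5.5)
Your proposal is correct and matches the paper's proof essentially step for step. Both factor $(s+\epsilon)^{-2}$ out of the average to get $E=s^2/(s+\epsilon)^2$, bound $0\le s/(s+\epsilon)<1$ to get $E\in[0,1)$, use $f'(s)=2s\epsilon/(s+\epsilon)^3\ge 0$ for monotonicity, and read off $E=0$ at $s=0$; the paper also notes that $E\to\mathbb{I}[s>0]$ as $\epsilon\to 0^+$, which explains the ``gate probability'' name but is not needed for the stated claims.
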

% \begin{proof}
% For $\epsilon>0$,
% $
% E=\tfrac{1}{G}\sum_i \tfrac{(R_i-\bar R)^2}{(s+\epsilon)^2}
% = \tfrac{s^2}{(s+\epsilon)^2}.
% $
% \end{proof}

For GRPO, the prompt-level update magnitude is dominated by whether the group is
\emph{non-degenerate} ($s>0$). In other words, training behaves like a gated procedure that updates
only when the rollout group contains mixed outcomes.

\begin{proposition}
[Gate opening probability under Bernoulli rewards]
\label{cor:gate_prob}
Let $p_\theta(x,h)=\Pr[R(x,\tau)=1 \mid x,h]$.
Then
\begin{equation}
\Pr[s>0 \mid x,h]
=
1-(1-p_\theta(x,h))^G-p_\theta(x,h)^G,
\label{eq:gate_prob}
\end{equation}
$\Pr[s>0\mid x,h]$ is maximized at $p_\theta=\tfrac{1}{2}$.
In the sparse regime $p_\theta(x,h)\ll 1$, $\Pr[s>0\mid x,h]\approx G\,p_\theta(x,h)$.
\end{proposition}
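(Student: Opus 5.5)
Conditional on $(x,h)$, the $G$ rollouts are drawn i.i.d.\ from $\pi_\theta(\cdot\mid x,h)$, so $R_1,\dots,R_G$ are i.i.d.\ Bernoulli with parameter $p\coloneqq p_\theta(x,h)$. The plan is to characterize the event $\{s=0\}$ combinatorially, compute its probability, and then read off the two claimed properties by elementary calculus and a Taylor expansion.

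\textbf{Step 1 (the event).} Since $s^2=\tfrac1G\sum_i(R_i-\bar R)^2$ is a sum of nonnegative terms, $s=0$ holds iff $R_i=\bar R$ for all $i$, i.e.\ iff all rewards are equal. With binary rewards this means either all $R_i=0$ or all $R_i=1$. These two events are disjoint for $G\ge 1$, and by independence they have probabilities $(1-p)^G$ and $p^G$. Taking the complement gives \eqref{eq:gate_prob}.

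\textbf{Step 2 (maximizer).} Set $f(p)=1-(1-p)^G-p^G$ on $[0,1]$. For $G=1$, $f\equiv 0$ and the claim holds trivially, so assume $G\ge2$. Then $f'(p)=G\big[(1-p)^{G-1}-p^{G-1}\big]$. Because $t\mapsto t^{G-1}$ is strictly increasing on $[0,1]$, $f'(p)>0$ for $p<\tfrac12$ and $f'(p)<0$ for $p>\tfrac12$, so $p=\tfrac12$ is the unique maximizer. Alternatively, note that $f$ is symmetric under $p\mapsto 1-p$ and concave, since $f''=-G(G-1)\big[(1-p)^{G-2}+p^{G-2}\big]<0$. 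The maximal value is $1-2^{1-G}$.

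\textbf{Step 3 (sparse regime).} Expand by the binomial theorem: $(1-p)^G=1-Gp+\binom G2p^2-\cdots$, so
$f(p)=Gp-\binom G2 p^2+O(p^3)-p^G$. Making the approximation precise is the only delicate point, since $G$ is not fixed relative to $p$. I would state it via the two-sided bound
\[
Gp-\tbinom G2p^2-p^G\le f(p)\le Gp,
\]
where the upper bound is the union bound (Bernoulli's inequality $(1-p)^G\ge 1-Gp$) and the lower bound is the Bonferroni inequality. Together these give $f(p)/(Gp)\to1$ whenever $Gp\to 0$, which is the regime $p\ll1$, $Gp\ll1$ emphasized in the paper. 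If one keeps $G$ fixed and lets $p\to0$, the relative error is $O(Gp)$.
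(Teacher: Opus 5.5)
Your proof is correct and follows the paper's argument. The gate formula comes from taking the complement of the two disjoint degenerate events (all-zero and all-one), and the maximizer comes from symmetry plus strict concavity of $u$; your first-derivative sign argument is an equivalent variant of that step. The one real difference is the sparse regime: the paper uses a fixed-$G$ Taylor expansion $u(p)=Gp+O(p^2)$, while your Bernoulli/Bonferroni sandwich $Gp-\binom{G}{2}p^2-p^G\le u(p)\le Gp$ is non-asymptotic and slightly sharpens the paper's claim. It makes explicit that the approximation actually requires $Gp\ll 1$, and also $G\ge 2$, since for $G=1$ one has $u\equiv 0$.
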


Thus, \textsc{SAGE} should choose the hint strength to move hard prompts out of the regime where $Gp_\theta\ll 1$, and avoid overly strong hints that push $p_\theta\approx 1$ to close the gate again.

\begin{figure*}[th]
    \centering
    \begin{subfigure}{0.49\textwidth}
        \centering
        \includegraphics[width=\linewidth]{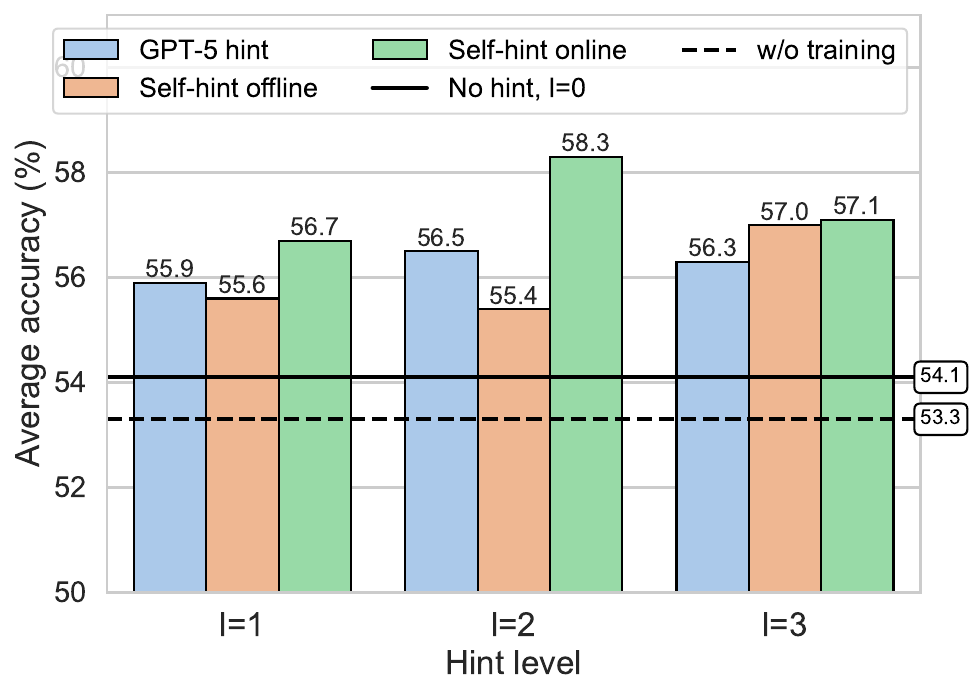}
        \label{fig:poc_hints_bar}
        \vspace{-6mm}
    \end{subfigure}
    \begin{subfigure}{0.49\textwidth}
        \centering
        \includegraphics[width=\linewidth]{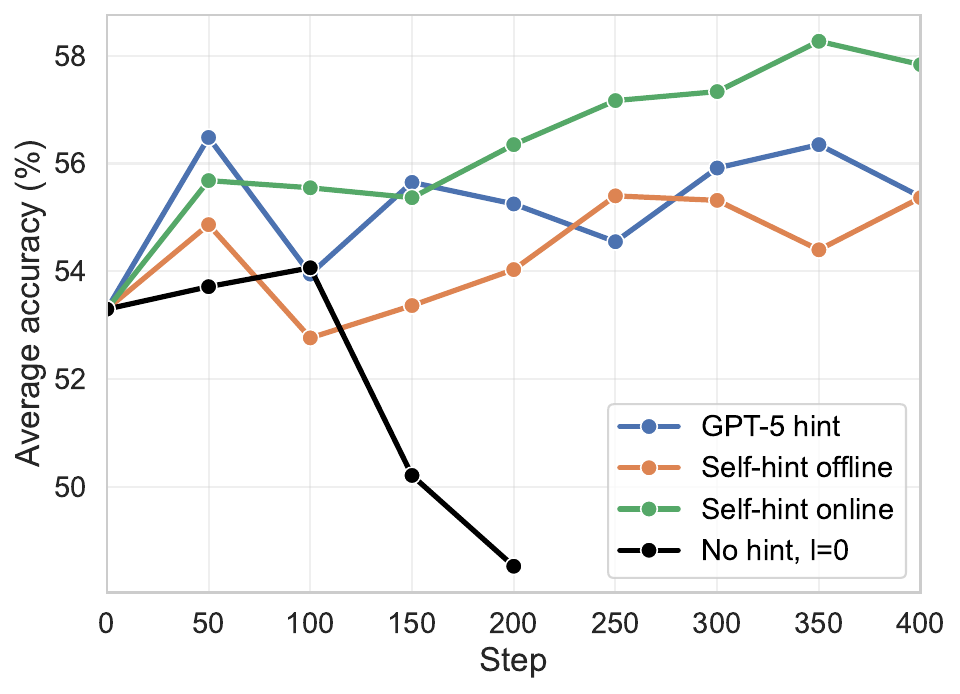}
        \label{fig:poc_hints_line}
        \vspace{-6mm}
  \end{subfigure}
  \caption{Average accuracy on Qwen3-4B-Instruct over 6 benchmarks. The 4.5k training prompts here are extremely hard, whose correct trajectories have never been sampled during training as Figure \ref{fig:pass_k_0_during_training}. The number of rollouts per prompt per step here is set to 32 to encourage exploration. \textbf{Left}: Performance on various hints. Training without hint only slightly improves the performance, since the reward signal from the hard prompts is sparse. However, training with any hint boosts the performance. Among all methods, online self-hinting consistently achieves the best performance across different hint levels. \textbf{Right}: Average accuracy w.r.t. the training steps for hint level $l=2$. Training without any hint even degrades the performance as the training goes, since the reward signal is too sparse, making it overfit to a few solvable prompts. However, online self-hinting boosts the performance steadily. Refer to Table \ref{tab:detailed_number_for_hint_levels} for detailed number.}
  \label{fig:poc_hints}
\end{figure*}

\begin{proposition}[Optimal hint distribution is policy-dependent]
\label{prop:q_dep_theta}
Fix a prompt $x$ and group size $G\ge 2$.
Let $p_\theta(x,h)=\Pr_{\tau\sim\pi_\theta(\cdot\mid x,h)}[R(x,\tau)=1]$ and define
\begin{equation}
u(p)\coloneqq 1-(1-p)^G-p^G.
\label{eq:gate_u}
\end{equation}
Under Bernoulli rewards, $u(p_\theta(x,h))=\Pr[s>0\mid x,h]$.

For any distribution $q(\cdot\mid x)$, define the expected probability
\begin{equation}
J_x(\theta,q)\coloneqq \mathbb E_{h\sim q(\cdot\mid x)}\!\left[u\!\left(p_\theta(x,h)\right)\right].
\label{eq:J_theta_q}
\end{equation}
Then $u$ is symmetric and strictly concave on $[0,1]$, and is maximized at $p=\tfrac12$.
Consequently, any maximizer
\begin{equation}
q_\theta^\star(\cdot\mid x)\in\arg\max_q J_x(\theta,q)
\end{equation}
must place its mass on \emph{calibrating hints} that make $p_\theta\approx\tfrac12$.

In general, the set of calibrating hints depends on $\theta$.
Unless $p_\theta(x,h)$ is invariant in $\theta$ for $q$-almost all $h$, a fixed $q$ cannot remain (near) optimal for $J_x(\theta,q)$ throughout training.
Updating $q$ online therefore reduces this gate-mismatch and increases the frequency of non-degenerate GRPO updates.
\end{proposition}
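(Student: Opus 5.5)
The plan has three parts. First, the identification $u(p_\theta(x,h))=\Pr[s>0\mid x,h]$ is exactly Proposition~\ref{cor:gate_prob}. Conditional on $(x,h)$, the rewards $R_1,\dots,R_G$ are i.i.d.\ Bernoulli$(p)$ with $p=p_\theta(x,h)$. The event $s=0$ is the disjoint union of "all zeros" and "all ones". So I will just invoke that result.

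Next come the analytic properties of $u$. Symmetry is immediate, since substituting $p\mapsto 1-p$ swaps the two subtracted terms. For concavity I differentiate twice:
\[
u''(p)=-G(G-1)\bigl[(1-p)^{G-2}+p^{G-2}\bigr].
\]
For $G\ge 2$ this is strictly negative on $[0,1]$: the bracket is a sum of two nonnegative terms that cannot vanish simultaneously, and for $G=2$ it equals $2$. Hence $u$ is strictly concave. A strictly concave function that is symmetric about $\tfrac12$ has a unique maximizer at $\tfrac12$. Indeed, $u'(\tfrac12)=G[(\tfrac12)^{G-1}-(\tfrac12)^{G-1}]=0$, and a strictly concave function has at most one critical point, which must be its global maximum.

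For the maximizer claim, I write $J_x(\theta,q)=\int u(p_\theta(x,h))\,dq(h)\le \sup_h u(p_\theta(x,h))\eqqcolon u^\star_\theta$. Equality holds only if $q$ puts all its mass on the set $H_\theta=\{h: u(p_\theta(x,h))=u^\star_\theta\}$, assuming this sup is attained, for example when the hint space is finite or discrete as in the paper. Since $u$ is strictly increasing on $[0,\tfrac12]$ and strictly decreasing on $[\tfrac12,1]$, the set $H_\theta$ consists precisely of the hints whose $p_\theta(x,h)$ is closest to $\tfrac12$. That is the formal meaning of "calibrating hints". If the sup is not attained, I will state the $\varepsilon$-version: any $\varepsilon$-maximizer concentrates its mass where $|p_\theta-\tfrac12|$ is near its infimum. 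I will do this by a Markov-type argument on the gap $u^\star_\theta-u(p_\theta(x,h))\ge 0$.

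The last part, that a fixed $q$ cannot stay optimal, is the main obstacle, because the statement is informal. I will make it precise as follows. Suppose $q$ is optimal at two parameters $\theta_1,\theta_2$. Then $q$ is supported on $H_{\theta_1}\cap H_{\theta_2}$. Unless the distances $|p_{\theta}(x,h)-\tfrac12|$ keep the same minimizers for $q$-a.e.\ $h$ across $\theta$, this intersection is $q$-null for some pair, which is a contradiction. The hypothesis "$p_\theta(x,h)$ invariant in $\theta$ for $q$-a.e.\ $h$" is a sufficient condition for the minimizers to persist. For the converse direction, I will give a continuity argument. Learning changes $p_\theta(x,h)$ for a hint $h\in H_{\theta_1}$ with, say, $p_{\theta_1}=\tfrac12$. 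Meanwhile some other hint $h'$ moves to $\tfrac12$ at $\theta_2$. Then $J_x(\theta_2,q)<u(\tfrac12)=J_x(\theta_2,q^\star_{\theta_2})$ by strict monotonicity of $u$ away from $\tfrac12$. This gap is the "gate-mismatch", and re-fitting $q$ online recovers it.
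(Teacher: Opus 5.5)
Your proposal is correct and follows essentially the same route as the paper: the gate identity from Proposition~\ref{cor:gate_prob}; symmetry plus $u''<0$, giving the unique maximizer $p=\tfrac12$; the bound $J_x(\theta,q)\le\sup_h u(p_\theta(x,h))$ (linearity in $q$), forcing optimal mass onto the hints whose $p_\theta(x,h)$ is closest to $\tfrac12$; and an informal drift argument for the last paragraph, which the paper likewise argues only heuristically (``typically varies with $\theta$''). Two small caveats concern that last part: invariance of $p_\theta(x,h)$ on the support of $q$ does not by itself keep $q$ optimal, since hints outside the support can move closer to $\tfrac12$ (unless the support already sits exactly at $\tfrac12$); and your contradiction needs only that $H_{\theta_1}\cap H_{\theta_2}$ fail to have full $q$-measure, not that it be $q$-null.
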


\begin{remark}[Why not sample many hints per prompt.]
Let $u(p)=1-(1-p)^G-p^G$ denote the gate probability in \eqref{eq:gate_prob}.
For $G\ge 2$, $u$ is concave on $[0,1]$ since
$
u''(p)=-G(G-1)\big(p^{G-2}+(1-p)^{G-2}\big)\le 0.
$
Therefore, for any hint distribution $h\sim q$,
\begin{equation}
\mathbb E_{h\sim q}\big[u(p_\theta(x,h))\big]
\le
u\!\left(\mathbb E_{h\sim q}[p_\theta(x,h)]\right).
\label{eq:jensen_gate}
\end{equation}
At a fixed mean success rate, additional randomness across hints can only reduce the
expected frequency of non-degenerate groups.
Motivated by \eqref{eq:jensen_gate}, we sample a single hint realization per prompt per epoch and spend
compute on $G$ or better strength scheduling.
\end{remark}

Overall, standardized GRPO turns sparse-reward learning into maximizing a gate probability.
The proposed method should operationalize this by (i) ensuring on-policy conditioning on $h$,
(ii) scheduling $\ell$ when the gate is closed,
 (iii) updating the self-hint generator online to keep $p_\theta$ calibrated.

\section{Design of SAGE}
\label{sec:design_sage}

% \textsc{SAGE} is designed to revive GRPO in sparse-reward regimes by
% introducing privileged hints during training.
% The core principle is simple: \emph{hints change how we sample rollouts, not how we reward them}.
% Concretely, we always evaluate trajectories with the same reward $R(x,\tau)$, while hints
% tilt the rollout distribution so that finite groups contain non-identical outcomes and GRPO
% advantages do not collapse.
% This section describes four implementation choices: on-policy training, policy-dependent hint-strength
% scheduling, online self-hint generation, and the full algorithm.

\subsection{On-policy training}
\label{sec:onpolicy_sage}

\textbf{Why hints must be in the conditioning context.}
\textsc{SAGE} appends a hint $h$ to the prompt and samples rollouts from the hint-conditioned policy
$\pi_\theta(\cdot\mid x,h)$.
This is not merely a modeling choice: it is what keeps training \emph{on-policy} for the augmented
context.
The loss is
$\sum_t \log \pi_\theta(y_t\mid x,h,y_{<t})$.
If one instead samples $\tau\sim\pi_\theta(\cdot\mid x,h)$ but evaluates
$\log \pi_\theta(\tau\mid x)$ (i.e., dropping $h$ inside the log-prob),
the update no longer corresponds to the gradient of any on-policy objective under the sampling
process.
In practice this mismatch behaves like an off-policy update and is markedly less stable under sparse
rewards. We also include a controlled ablation (Sec.~\ref{sec:dis}) that keeps the \emph{sampling} process identical (roll out with
hint) but changes the conditioning.

\subsection{Online self-hinting}
\label{sec:online_hint}

In the algorithm, we can produce privileged hints in two ways:
(1) \textbf{Offline hints (fixed).}
    A fixed hint generator (e.g., extracted once from $\tau^\star$) is simple, but it does not adapt to the learner and can become miscalibrated over training.
(2) \textbf{Online hints.}
    We periodically refresh the hint generator using a copy of the current policy $\pi_\theta$.

In SAGE, hint generation is \emph{online}. We implement $q_{\phi}(h\mid x,\tau^\star,\ell)$ by prompting the policy $\pi_{\theta}$ to produce a \emph{procedure-only} plan aligned with the reference trajectory.
%This design keeps hints calibrated to the model's evolving capabilities: as the policy improves, the same hint level $\ell$ naturally yields more compact and higher-level guidance, while persistent failure modes trigger stronger or more detailed hints through the scheduler. 

We evaluate three variants: (1) \textbf{Fixed privileged hints:} $q_{\phi}$ is derived from $\pi_{\theta_0}$ and frozen after initialization.
(2) \textbf{Online privileged hints (\textsc{SAGE}):} $q_{\phi}$ is derived from $\pi_{\theta}$ and refreshed during training.
(3) \textbf{External teacher hints:} $q_{\phi}$ is produced by a stronger frozen model, when available.

By Figure~\ref{fig:poc_hints}, adding hints improves performance across hint levels compared with no hint, consistent with hints increasing the chance of sampling informative trajectories under sparse rewards.
Besides, online self-hinting consistently performs best, indicating that continually refreshed self-hints are better calibrated to the learner than fixed hints.

\begin{algorithm*}[t]
\caption{\textsc{SAGE}  / \textsc{SAGE-light}: Self-hint Aligned GRPO with Privileged Supervision}
\label{alg:priv_hint_grpo}
\begin{algorithmic}[1]
\small
\REQUIRE Training set $\mathcal{D}=\{(x,\tau^\star)\}$, policy model $\pi_\theta$, group size $G$,
KL weight $\beta$, stabilizer $\epsilon>0$, max hint level $L$, reference policy $\pi_{\mathrm{ref}}$, hint generator $q_\phi(h\mid x,\tau^\star,\ell)$ based on $\pi_\theta$, threshold $\alpha$ (\textsc{SAGE-light} only).

\STATE Initialize policy parameters $\theta$; initialize per-prompt level map $\ell(x)\leftarrow 0$ for all $x\in\mathcal{D}$.
\STATE Repeat for epochs:
\STATE \quad \textbf{for} each minibatch $\{(x_b,\tau_b^\star)\}_{b=1}^B$:
\STATE \qquad \textbf{for} $b=1,\dots,B$ \textbf{do}
\STATE \qquad\quad \textbf{if} \textsc{\textsc{SAGE-light}} \textbf{then}
\STATE \qquad\qquad \textbf{if} epoch $>1$ \textbf{and} $\bar{R}_b<\alpha$ \textbf{then} $\ell(x_b)\leftarrow \min\{\ell(x_b)+1,L\}$ 
\STATE \qquad\qquad Sample $h_b\sim q_\phi(h\mid x_b,\tau_b^\star,\ell(x_b))$
\STATE \qquad\quad \textbf{else} \hfill (\textsc{SAGE})
\STATE \qquad\qquad \textbf{for} $\ell=0,\dots,L$ \textbf{do}
\STATE \qquad\qquad\quad Sample $\tilde{h}_b\sim q_\phi(h\mid x_b,\tau_b^\star,\ell)$
\STATE \qquad\qquad\quad Sample $\tilde{\tau}_{b,i}\sim \pi_\theta(\cdot\mid x_b,\tilde{h}_b)$ for $i=1,\dots,G$ and compute $\tilde{R}_{b,i}\leftarrow R(x_b,\tilde{\tau}_{b,i})$
\STATE \qquad\qquad\quad \textbf{if} $\sum_{i=1}^G \tilde{R}_{b,i}>0$ \textbf{or} $\ell=L$ \textbf{then}
\STATE \qquad\qquad\qquad $h_b\leftarrow \tilde{h}_b,\; \tau_{b,i}\leftarrow \tilde{\tau}_{b,i},\; R_{b,i}\leftarrow \tilde{R}_{b,i}$; \textbf{break}
\STATE \qquad\quad {(If \textsc{{SAGE-light}})} Sample $\tau_{b,i}\sim \pi_\theta(\cdot\mid x_b,h_b)$ for $i=1,\dots,G$ and compute $R_{b,i}\leftarrow R(x_b,\tau_{b,i})$.
\STATE \qquad\quad Compute $\bar{R}_b \leftarrow \tfrac{1}{G}\sum_{i=1}^G R_{b,i}$,
$s_b \leftarrow \sqrt{\tfrac{1}{G}\sum_{i=1}^G (R_{b,i}-\bar{R}_b)^2}$,
$A_{b,i}\leftarrow \tfrac{R_{b,i}-\bar{R}_b}{s_b+\epsilon}$.
\STATE \qquad $\mathcal{L}_{\mathrm{pg}} \leftarrow -\tfrac{1}{BG}\sum_{b=1}^B\sum_{i=1}^G A_{b,i}\sum_{t=1}^{T_{b,i}}\log \pi_\theta\!\left(y_{b,i,t}\mid x_b,h_b,y_{b,i,<t}\right)$
\STATE \qquad $\mathcal{L}_{\mathrm{kl}} \leftarrow \tfrac{1}{B}\sum_{b=1}^B
\mathrm{KL}\!\left(\pi_\theta(\cdot\mid x_b,h_b)\,\|\,\pi_{\mathrm{ref}}(\cdot\mid x_b,h_b)\right)$
\STATE \qquad Update $\theta \leftarrow \theta - \eta\,\nabla_\theta\big(\mathcal{L}_{\mathrm{pg}}+\beta\,\mathcal{L}_{\mathrm{kl}}\big)$
\STATE \textbf{Deployment:} set $\ell=0$ so $h=\varnothing$, and run $\pi_\theta(\cdot\mid x)$.
\end{algorithmic}
\end{algorithm*}

\subsection{Policy-dependent scheduling}
\label{sec:sage_scheduling}

We control hint informativeness with a discrete strength variable $\ell\in\{0,1,\dots,L\}$,
where $\ell=0$ corresponds to the deployable no-hint setting.
The scheduler is \emph{policy-dependent} in the sense that it adapts $\ell$ using statistics collected
from recent rollouts under the policy $\pi_{\theta}$ (stop-gradient).
Intuitively, we increase $\ell$ only when the current policy provides insufficient learning signal on a
prompt.

\textbf{Scheme 1 (\textsc{SAGE-light}): epoch-level accuracy threshold.}
Let $\hat p_{t-1}(x)$ denote the empirical success rate of prompt $x$ measured in the previous epoch using rollouts from policy model with current hint level.
Given a target threshold $\alpha\in(0,1)$, we increase hint strength when the prompt is too hard:
\begin{equation}
\ell_t(x)=
\min\big\{\,\ell_{t-1}(x)+1,\; L\,\big\}
\quad \text{if } \hat p_{t-1}(x)<\alpha,
\label{eq:sched_acc}
\end{equation}
and otherwise keep $\ell_t(x)=\ell_{t-1}(x)$.
Thus, hints are activated only when success is consistently rare.

\textbf{Scheme 2 (\textsc{SAGE}): group-degeneracy trigger.}
GRPO requires within-group outcome differences to produce a nonzero update.
We therefore use a more local trigger based on whether a probe group contains any positive sample.
For a small probe group $\{\tau_i\}_{i=1}^G$ rolled out from $\pi_{\theta}(\cdot\mid x,h)$ at the current
strength $\ell_{t-1}(x)$, define
$
z(x)=\mathbb I\!\left[\sum_{i=1}^G R(x,\tau_i)=0\right],
$
i.e., $z(x)=1$ when the group has no positive rollouts.
We then increase hint strength only on such collapsed prompts:
\begin{equation}
\ell_t(x)=
\min\big\{\,\ell_{t-1}(x)+1,\; L\,\big\}
\quad \text{if } z(x)=1,
\label{eq:sched_nopos}
\end{equation}
and otherwise keep $\ell_t(x)=\ell_{t-1}(x)$.
This rule targets the specific finite-sample pathology of sparse rewards: when a group contains no
positives, standardized GRPO advantages collapse and the policy-gradient estimator vanishes.

\textbf{Discussion.}
\textsc{SAGE-light} (Scheme~1) is compute-efficient because it updates the hint strength only at the epoch level (no additional computation for rollouts), but it can react slowly to sudden reward collapse.
\textsc{SAGE}  (Scheme~2) is more reactive and directly targets GRPO's failure mode via a no-positives trigger, at the cost of additional probe rollouts.
We report results for both schemes, and find that the no-positives trigger typically recovers faster from stalled training on hard prompts and yields better performance.

Overall, Algorithm~\ref{alg:priv_hint_grpo} summarizes \textsc{SAGE} implementation.
Each prompt draws \emph{one} hint per epoch (given $\ell$), and all $G$ rollouts for that prompt share
the sam context.
This design reduces unnecessary variance from hint.

\begin{table*}[t]
    \caption{Accuracy on in-distribution and out-of-distribution tasks across three LLMs. The \textbf{best} and \underline{second-best} results are in bold and underlined, respectively. \textsc{SAGE}  and \textsc{SAGE-light} consistently outperform baselines on average across various LLMs.}
    \label{tab:main_results}
    \vspace{-2mm}
    \scriptsize
    \begin{center}
    \begin{tabular}{lccccccrcccr}
    \toprule
        \multirow{3}{*}{\textbf{Method}} & \multicolumn{7}{c}{\textbf{In-distribution}} & \multicolumn{4}{c}{\textbf{Out-of-distribution}} \\
        \cmidrule(lr){2-8} \cmidrule(lr){9-12} 
        & \textbf{AIME24 / 25} & \textbf{AMC23} & \textbf{MATH-500} & \textbf{Minerva} & \textbf{Olympiad} & \textbf{Avg.} & \textbf{$\Delta$} & \textbf{GPQA} & \textbf{MMLU-Pro} & \textbf{Avg.} & \textbf{$\Delta$} \\
    \midrule
        \textit{Llama-3.2-3B-Instruct} & 6.5 / 0.6 & 22.8 & 44.7 & 17.8 & 14.2 & 17.8 & 0 & 17.9 & 27.0 & 22.5 & 0 \\
        \cmidrule(lr){1-12}
        SFT & 0.4 / 0.6 & 9.5 & 26.9 & 5.1 & 6.5 & 8.2 & $-$9.6 & 11.6 & 18.8 & 15.2 & $-$7.3 \\
        GRPO & 6.7 / 0.8 & 29.5 & 52.1 & \underline{20.5} & \underline{21.8} & 21.9 & $+$4.1 & 26.3 & \underline{39.8} & 33.1 & $+$10.6 \\
        LUFFY & 4.4 / 0.4 & 18.6 & 38.9 & 14.3 & 11.9 & 14.7 & $-$3.1 & 16.0 & 26.7 & 21.4 & $-$1.1 \\
        Scaf-GRPO & 7.7 / \textbf{2.3} & 28.8 & 51.7 & 19.4 & 19.5 & 21.5 & $+$3.7 & 24.1 & 38.0 & 31.0& $+$8.5 \\
        \rowcolor{blue!8}
        \textsc{SAGE-light} & \underline{8.8} / \underline{1.9} & \underline{32.2} & \underline{54.1} & \textbf{20.8} & 20.1 & \underline{23.0} & \underline{$+$5.2} & \underline{26.8} & 39.6 & \underline{33.2} & \underline{$+$10.7} \\
        \rowcolor{orange!8}
        \textsc{SAGE}  & \textbf{9.2} / 0.8 & \textbf{34.7} & \textbf{56.3} & 20.1 & \textbf{22.0} & \textbf{23.9} & \textbf{$+$6.1} & \textbf{27.3} & \textbf{40.7} & \textbf{34.0} & \textbf{$+$11.5} \\
    \midrule
        \textit{Qwen2.5-7B-Instruct} & 13.8 / 6.7 & 53.4 & 75.7 & 38.1 & 39.2 & 37.8 & 0 & 37.1 & 56.4 & 46.7 & 0 \\
        \cmidrule(lr){1-12}
        SFT & 3.5 / 7.1 & 30.9 & 56.2 & 20.0 & 21.7 & 23.2 & $-$14.6 & 9.5 & 35.6 & 22.5 & $-$24.2 \\
        GRPO & 15.0 / \textbf{13.5} & 55.5 & 79.2 & 39.1 & 44.5 & 41.1 & $+$3.3 & 37.2 & 57.6 & 47.4 & $+$0.7 \\
        LUFFY & \textbf{17.1} / \textbf{13.5} & 55.2 & \textbf{81.3} & 39.0 & 44.2 & 41.7 & $+$3.9 & \textbf{38.1} & \underline{59.1} & \textbf{48.6} & \textbf{$+$1.9} \\
        Scaf-GRPO & 14.6 / \underline{12.7} & \underline{58.8} & 78.0 & \textbf{39.8} & 42.0 & 41.0 & $+$2.2 & 36.6 & 58.4 & 47.5 & $+$0.8 \\
        \rowcolor{blue!8}
        \textsc{SAGE-light} & \textbf{17.1} / 11.7 & 58.1 & 79.9 & 38.6 & \textbf{46.1} & \underline{41.9} & \underline{$+$4.1} & 36.6 & 58.8 & \underline{47.7} & \underline{$+$1.0} \\
        \rowcolor{orange!8}
        \textsc{SAGE}  & \underline{16.0} / 12.5 & \textbf{60.3} & \underline{80.0} & \underline{39.3} & \underline{45.9} & \textbf{42.3} & \textbf{$+$4.5} & \underline{38.0} & \textbf{59.3} & \textbf{48.6} & \textbf{$+$1.9} \\
    \midrule
        \textit{Qwen3-4B-Instruct} & 52.1 / 43.1 & 92.2 & 93.6 & 46.1 & 67.7 & 65.8 & 0 & \underline{57.6} & 70.9 & 64.3 & 0 \\
        \cmidrule(lr){1-12}
        SFT & 14.4 / 22.1 & 55.2 & 78.9 & 38.1 & 40.2 & 41.5 & $-$24.3 & 29.4 & 52.6 & 41.0 & $-$23.3 \\
        GRPO & 55.8 / 45.0 & \textbf{95.0} & 96.0 & \textbf{50.1} & 70.4 & 68.7 & $+$2.9 & 57.0 & 72.0 & \underline{64.5} & \underline{$+$0.2} \\
        LUFFY & 42.3 / 36.0 & 86.4 & 91.1 & 48.2 & 59.4 & 60.6 & $-$5.2 & 31.9 & 46.6 & 39.3 & $-$25.0 \\
        Scaf-GRPO & \textbf{59.8} / 45.2 & 92.2 & \underline{95.1} & 48.9 & 69.8 & 68.5 & $+$2.7 & 54.3 & \underline{72.1} & 63.2 & $-$1.1 \\
        \rowcolor{blue!8}
        \textsc{SAGE-light} & \underline{59.2} / \underline{47.1} & 92.2 & \underline{95.1} & \underline{49.5} & \underline{70.5} & \underline{68.9} & \underline{$+$3.1} & 57.1 & 72.0 & \underline{64.5} & \underline{$+$0.2} \\
        \rowcolor{orange!8}
        \textsc{SAGE}  & 58.1 / \textbf{52.1} & \underline{94.2} & \textbf{95.4} & 49.2 & \textbf{71.2} & \textbf{70.0} & \textbf{$+$4.2} & \textbf{57.8} & \textbf{72.5} & \textbf{65.2} & \textbf{$+$0.9} \\
    \bottomrule
    \end{tabular}
    \end{center}
    \vskip -0.1in
\end{table*}

\begin{figure*}[t]
    \centering
    \includegraphics[width=0.98\linewidth]{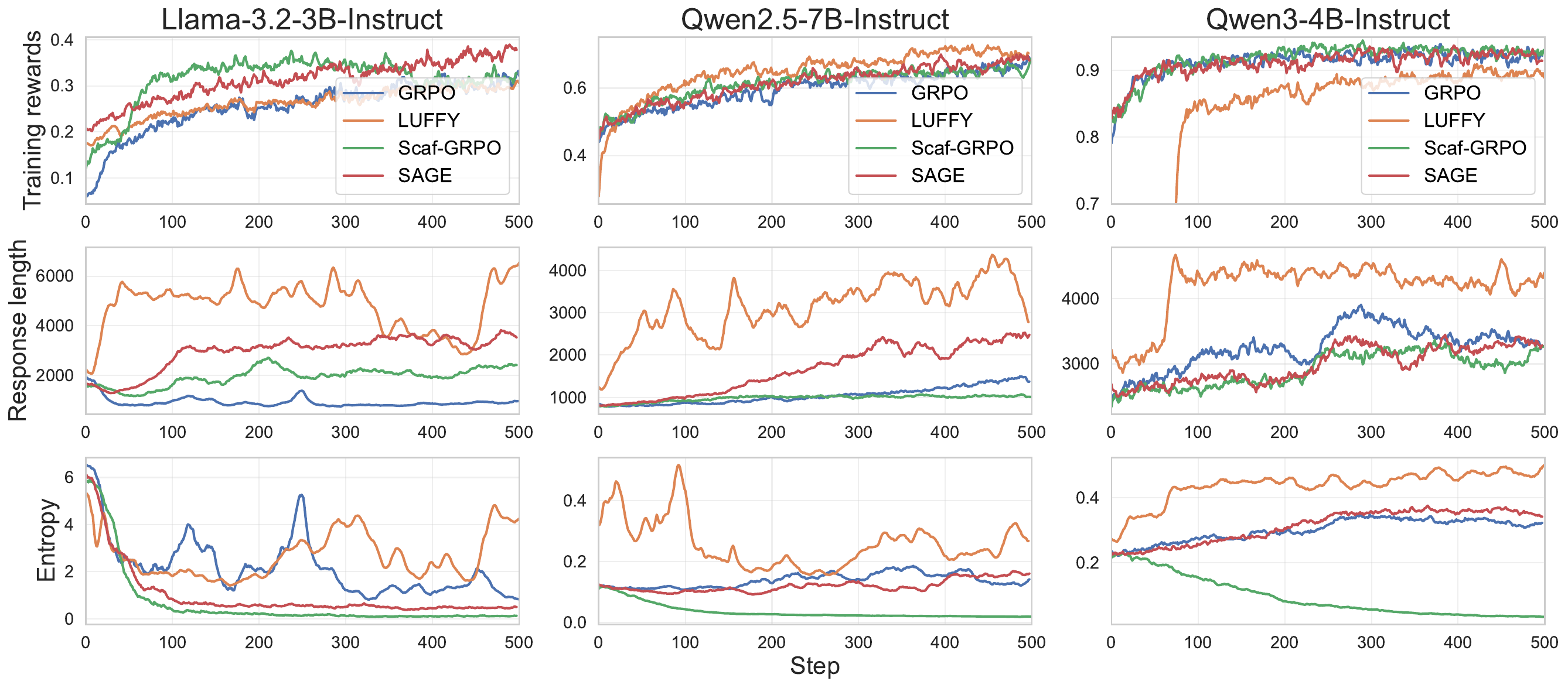}
    \vspace{-2mm}
    \caption{The training dynamics of different methods. For the training rewards, one should focus on the trend instead of the value, since adding hint (\textsc{SAGE}  and Scaf-GRPO) modifies the prompt difficulty, and using a correct off-policy trajectory (LUFFY) increases the reward. (1) LUFFY shows the most instability, with a very high entropy for Llama and a very low reward at the beginning of training for Qwen3, because it imitates the off-policy trajectory whose distribution might not be aligned with the policy model. (2) Scaf-GRPO shows the lowest entropy, implying less exploration. (3) \textsc{SAGE}  retains the on-policy characteristic, has a mild entropy and shows a stable growth in response length, which normally implies a better reasoning pattern.}
    \label{fig:trainnig_dynamics}
\end{figure*}

\section{Empirical Results}
\label{sec:empirical_results}

\textbf{Models.} We use LLMs with varying degrees of math specialization: Llama-3.2-3B-Instruct \citep{meta_llama3}, Qwen2.5-7B-Instruct \citep{qwen2}, and Qwen3-4B-Instruct-2507 \citep{yang2025qwen3}, representing low, moderate, and high levels of math-focused optimization, respectively, with the latter trained extensively via RL. 

\textbf{Training set.} Our training data are drawn from OpenR1-Math-220k \citep{openr1}, using prompts from NuminaMath 1.5 \citep{li2024numinamath} and reasoning traces generated by DeepSeek-R1 \citep{deepseekai2025deepseekr1incentivizingreasoningcapability}. The initial dataset contains 94k prompts. To ensure answer verifiability, we apply the \emph{Math-Verify} tool \citep{Kydlicek_Math-Verify_Math_Verification} to remove prompts whose DeepSeek-R1 reasoning traces are incorrectly verified, resulting in 64k prompts. These prompts are used in Figure \ref{fig:pass_k_0_during_training}. Due to limited  resources, we further subsample 15k prompts from this set, restricting the corresponding DeepSeek-R1 reasoning traces to fewer than 8,192 tokens. This constraint is necessary because one of our baselines, LUFFY \citep{yan2025learning}, relies on these reasoning traces, and excessively long traces would significantly increase RL training time. As we do not filter prompts based on pass rate, the resulting 15k prompts span a wide range of difficulty levels, resembling a practical RL training dataset.

\textbf{Evaluation sets.} We primarily evaluate our models on six widely used mathematical benchmarks: AIME24 \citep{aime24}, AIME25 \citep{aime25}, AMC23 \citep{li2024numinamath}, MATH-500 \citep{hendrycks2021measuring}, Minerva Math \citep{lewkowycz2022solving}, and OlympiadBench \citep{he2024olympiadbench}. In addition, we include two non-mathematical benchmarks, GPQA-diamond \citep{rein2024gpqa} and MMLU-Pro \citep{wang2024mmlu}, to assess the generalization ability of the trained models. \textbf{Notably, we only use hint during training. The prompt alone is the input to an LLM for evaluation.}

\textbf{Baselines.} We compare \textsc{SAGE}  with the following baselines: (1) Supervised Fine-Tuning (SFT), which finetunes the model on reasoning traces from DeepSeek-R1; (2) GRPO \citep{shao2024deepseekmath}, which learns without any hints; (3) LUFFY \citep{yan2025learning}, which replaces one on-policy trajectory with the corresponding correct trajectory from DeepSeek-R1; and (4) Scaf-GRPO \citep{zhang2025scaf}, which incorporates hints generated by GPT-5.2 under a low-reasoning-effort setting. Notably, SFT, LUFFY and Scaf-GRPO all rely on a stronger external LLM, whereas \textsc{SAGE}  learns only from self-generated hints. For fair comparison, we reproduce LUFFY and Scaf-GRPO using their open-source implementations on the same 15k sampled prompts, aligning only the batch size and number of training steps.

\textbf{Implementation details.} We run all experiments on 8 A100 GPUs, and use verl \citep{sheng2025hybridflow} for training and vLLM \citep{kwon2023efficient} for sampling. Following DAPO \citep{yu2025dapo}, we disable the KL term by setting $\beta = 0$, and apply asymmetric clipping with $\epsilon_{\text{low}} = 0.2$ and $\epsilon_{\text{high}} = 0.28$. Unless otherwise specified, the maximum response length is set to 8096 for both training and evaluation,\footnote{We use 8096 for the main results, as required by LUFFY, and 2048 for the remaining experiments due to resource constraints.} with a batch size of 128, 8 trajectories per prompt,\footnote{We use 4 trajectories for Qwen3-4B-Instruct due to slower training caused by its long response length.} and 500 training steps in total. We evaluate every 50 steps, and report the best average accuracy over all checkpoints. We set $L$ as 3, and $\alpha=0.35$ for \textsc{SAGE-light}. Details of the prompt used for hint generation and injection are provided in Appendix \ref{app:prompt_for_hint}. Complete training and evaluation settings for all methods are reported in Appendix \ref{app:detailed_implementation_settings}.

\subsection{Main results}
We report results for all methods and LLMs on eight benchmarks in Table \ref{tab:main_results}, with corresponding training dynamics shown in Figure \ref{fig:trainnig_dynamics}. Across three base models, \textsc{SAGE}  consistently achieves the highest average performance among all baselines, yielding improvements of +6.1 (Llama-3.2), +4.5 (Qwen2.5), and +4.2 (Qwen3) on average across the benchmarks. We use a fixed training set for all LLMs, despite their differing degrees of optimization for mathematical tasks. Consequently, the training set is relatively easier for Qwen3 and more challenging for Llama, a discrepancy that is also reflected in Figure \ref{fig:pass_k_0_during_training}. Nevertheless, \textsc{SAGE}  consistently improves performance across all LLMs, demonstrating robust and effective generalization.

\textbf{SAGE vs. SFT.} SFT yields the worst performance, underperforming even the base LLM, due to its tendency to overfit training data. In contrast, \textsc{SAGE}  preserves the RL characteristics, selectively sharpening the model’s distribution to correct trajectories.

\begin{table}[t]
    \caption{Percentage of prompts without any training signal, i.e., not any correct trajectories of these prompts are sampled during the whole training procedure.}
    \label{tab:useless_prompts}
    \vspace{-2mm}
    \scriptsize
    \begin{center}
    \begin{tabular}{lrrr}
    \toprule
    \textbf{Method} & \textbf{Llama-3.2-3B} & \textbf{Qwen2.5-7B} & \textbf{Qwen3-4B} \\
    \midrule
    Base & 56.9\% & 29.8\% & 21.8\% \\
    GRPO &  40.2\% & 10.3\% & 1.3\% \\
    \textsc{SAGE}  & 30.0\% & 8.2\% & 1.0\% \\
    \bottomrule
    \end{tabular}
    \end{center}
    \vskip -0.1in
\end{table}

\textbf{SAGE vs. GRPO.} Table~\ref{tab:useless_prompts} reports the proportion of prompts that never provide a training signal. Compared to GRPO, \textsc{SAGE} makes substantially more effective use of the prompt set. This effect is particularly pronounced for the weaker LLM, Llama-3.2: by leveraging self-generated hints, \textsc{SAGE}  successfully utilizes 10\% more prompts, leading to the largest performance improvement over GRPO (+2.0). For the stronger model, Qwen3, \textsc{SAGE}  behaves more similarly to GRPO, with nearly identical prompt utilization. Nevertheless, despite using only 0.3\% more prompts, \textsc{SAGE}  still achieves a +1.3 accuracy gain over GRPO. These hard prompts play a critical role in RL, which aligns with prior work \citep{xiong2025reinforce, yu2025dapo} that favors RL on prompt sets with lower pass rates. Furthermore, in Figure \ref{fig:trainnig_dynamics}, \textsc{SAGE}  exhibits faster response-length growth than GRPO for both Llama-3.2 and Qwen2.5,  due to learning from hard prompts that fail to provide any signal under GRPO.

\textbf{SAGE vs. LUFFY.} LUFFY exhibits the second-largest degree of off-policy behavior, following SFT, as one of its trajectories is generated by a different model. In Figure \ref{fig:trainnig_dynamics}, the response length increases dramatically at the early stage of training, reflecting the LLM’s tendency to imitate the stronger model. However, this off-policy setting introduces training instability due to the misalignment between the policy model and the stronger model. Specifically, Llama-3.2 trained with LUFFY displays excessively high entropy and highly oscillatory response lengths, while Qwen3 trained with LUFFY suffers from very low rewards at the beginning of training. As reported in Table \ref{tab:main_results}, LUFFY only outperforms GRPO (while still underperforming \textsc{SAGE}) on Qwen2.5, and performs worse than the base model on both Llama-3.2 and Qwen3.

\textbf{SAGE vs. Scaf-GRPO.} Scaf-GRPO relies on hints generated by a stronger model (e.g., GPT-5.2 in our setting). As shown in Figure \ref{fig:trainnig_dynamics}, it exhibits the lowest entropy among all methods, indicating limited exploration. This behavior may stem from the hints revealing excessive information. In contrast, \textsc{SAGE}  maintains an entropy level comparable to GRPO,\footnote{The entropy of GRPO on Llama-3.2 is abnormally high.} while consistently outperforming Scaf-GRPO in Table \ref{tab:main_results}. Moreover, learning from self-generated hints enables a more end-to-end training procedure and simplifies implementation.

\textbf{SAGE vs, \textsc{SAGE-light}.} \textsc{SAGE-light} achieves slightly lower accuracy but improves efficiency, requiring 53\% of SAGE’s training time.

\textbf{Out-of-distribution performance.} In Table \ref{tab:main_results}, performance on out-of-distribution benchmarks shows a similar pattern to that on in-distribution ones.  SAGE and \textsc{SAGE-light} consistently achieve the best and second-best accuracy, respectively, indicating superior generalization capability.

\begin{table}[t]
    \caption{Training time on Qwen2.5-7B-Instruct.}
    \label{tab:latency}
    \vspace{-2mm}
    \scriptsize
    \begin{center}
    \begin{tabular}{ccccc}
    \toprule
    GRPO & LUFFY & Scaf-GRPO & SAGE & \textsc{SAGE-light} \\
    1.0$\times$ (25.3h) & 1.2$\times$ & 1.5$\times$ & 2.3$\times$ & 1.2$\times$ \\
    \bottomrule
    \end{tabular}
    \end{center}
    \vskip -0.1in
\end{table}

\subsection{Discussion}\label{sec:dis}
\textbf{Latency.} A potential limitation of SAGE is its latency, as it must generate and use hints on the fly when a correct trajectory of the prompt can't be sampled. Table~\ref{tab:latency} reports the training time of different RL methods. Among them, SAGE incurs the highest training cost, while \textsc{SAGE-light} requires only slightly more time than GRPO. For highly complex prompts, SAGE may sample hints across multiple levels (from $l=0$ to $l=3$), which increases computational overhead. In contrast, \textsc{SAGE-light} leverages the prompt accuracy from the previous epoch to select an appropriate hint level, and thus samples from only a single level. These two SAGE variants provide flexible trade-offs for practitioners with different efficiency requirements, and both consistently outperform the baseline methods.

\begin{figure}[t]
    \centering
    \includegraphics[width=0.98\linewidth]{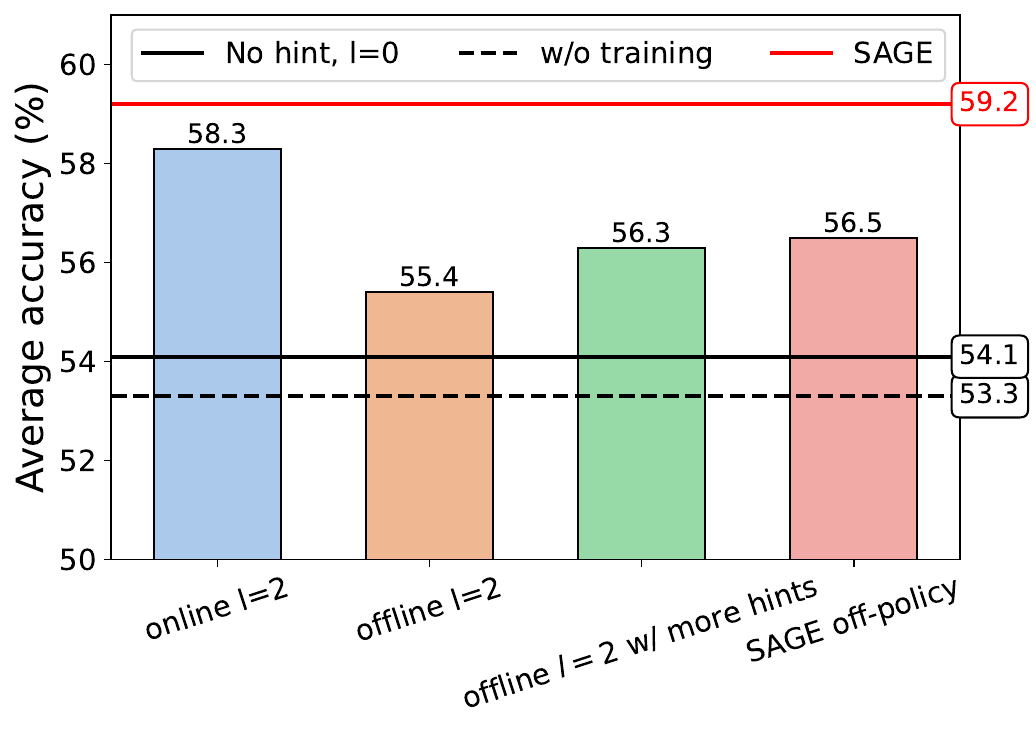}
    \caption{Ablation studies on Qwen3-4B-Instruct trained with the same prompt set as Figure \ref{fig:poc_hints}.}
    \label{fig:ablation}
    \vspace{-2mm}
\end{figure}

\begin{figure}[t]
    \centering
    \includegraphics[width=0.98\linewidth]{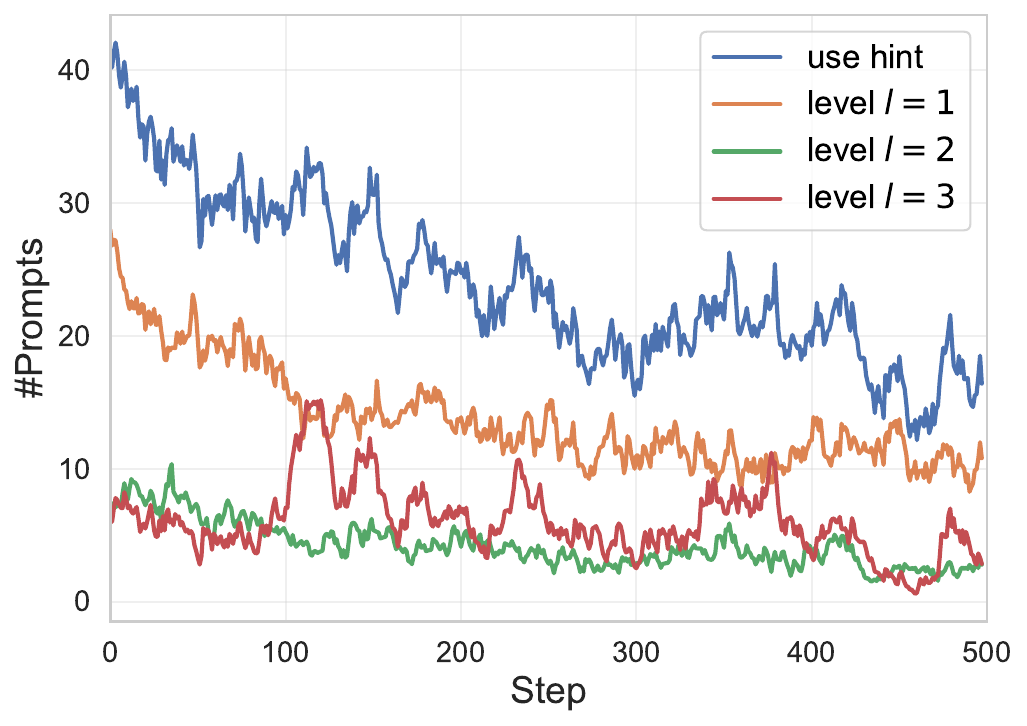}
    \caption{Number of prompts uses hint during training on Llama-3.2-3B-Instruct. Batch size is 128. The model use less hint w.r.t. the training step, indicating that the model becomes more powerful.}
    \label{fig:training_dynamics_level}
    \vspace{-2mm}
\end{figure}

\textbf{Offline with more hints.} In Figure~\ref{fig:poc_hints}, online self-hinting generates a new hint at each training step, whereas offline self-hinting relies on a fixed hint generated prior to training. One possible explanation for the superior performance of online self-hinting is the increased diversity of hints. To examine this, we have an additional ablation (Figure~\ref{fig:ablation}): offline self-hinting with multiple hints. Specifically, before training, we use the base LLM to generate 10 hints with \texttt{temperature=1.0}. During training, a different hint is used for the same prompt at each step, ensuring that identical prompts are paired with diverse hints over time. We observe that increased hint diversity indeed improves performance, yielding a +0.9 gain over standard offline self-hinting. Nevertheless, online self-hinting still outperforms this variant by a margin of 2.0. We argue that hint generated in an online manner offer more benefit than diversity.

\textbf{Off-policy.} We ablate whether the policy-gradient matches the sampling context: \textsc{SAGE} optimizes $\log \pi_\theta(\cdot\mid x,h)$, while an off-policy variant rolls out with $h$ but optimizes $\log \pi_\theta(\cdot\mid x)$. Figure~\ref{fig:ablation} shows a clear drop for the off-policy variant (56.5) compared with on-policy \textsc{SAGE} (59.2) and even the single level hint level baseline (58.3).

\textbf{Same level hint vs. SAGE.} SAGE does not rely on a fixed hint level. Instead, it adaptively increases the hint level only when the current level fails to yield a correct response. In Figure \ref{fig:ablation}, this strategy leads to a clear performance gain over using a constant hint level (e.g., online $l=2$), achieving an improvement of +0.9. This design enables more effective utilization of the hard prompt, allowing the LLM to progressively learn from w\textbf{}eaker hints, down to $l=0$.

\textbf{Less hint w.r.t. step.} In Figure \ref{fig:training_dynamics_level}, we can observe that the LLM use less and less hint during the training. It indicates that self-hinting indeed enhances RL. LLM becomes more and more powerful and can gradually solve difficult problems without the help of a hint. 

\textbf{Case study.} An example on how hint helps (Appendix \ref{app:case_study}).

\section{Related Work}
\textbf{Data resampling and external guidance.} 
Data selection and filtering are widely used in online RL for LLMs \citep{zhang2024policy,dong2023raft,xiong2023iterative,dong2024rlhf,shi2024crucial,liao2025reward,feng2025pilaf}, and become particularly important for GRPO-style methods where groupwise advantages can collapse under sparse rewards. Prior work mitigates this issue mainly by reshaping the training distribution or injecting external guidance. A common workaround is to skip degenerate groups and resample or upweight prompts \citep{yu2025dapo,xiong2025minimalist,yao2025optimizing,li2025knapsack}, which improves efficiency but biases training toward prompts with non-trivial success probability. Another direction bootstraps learning by adding positive trajectories from stronger teachers, reference models, or offline buffers \citep{zhang2025stephint,yan2025learning}, but this can introduce context or distribution mismatch when mixed with on-policy rollouts. In contrast, \textsc{SAGE}  preserves a clean on-policy objective by changing the rollout distribution through privileged hinting, without discarding hard prompts or relying on static external buffers.

\textbf{Privileged hinting and \textsc{SAGE}.} While leveraging intermediate guidance, such as plans or gold solutions, has a rich history in RL \citep{ng1999policy,szepesvari2022algorithms,ouyang2022training}, recent LLM-specific adaptations often implement hinting via heuristic ``batch surgery." For instance, \citet{zhang2025scaf} mitigates collapse by augmenting rollout batches with hinted trajectories upon detecting failure. This approach mixes contexts (e.g., $x$ and $x,h$ ) within a single group, which blurs the interpretation of groupwise baselines and advantage normalization. Furthermore, external hint generators may not be calibrated to the learner's current capabilities. \textsc{SAGE}  distinguishes itself through three key design choices: (1) it formalizes hinting as an explicitly augmented on-policy sampling process, ensuring the GRPO loss remains well-defined; (2) it employs a policy-dependent strength scheduler that activates hints only when the "learning gate" is closed; and (3) it utilizes online self-hinting via a lagged policy, ensuring the hint distribution tracks the learner’s evolving support rather than relying on a potentially misaligned external teacher.

\section{Conclusion}

We identify a finite-sample degeneracy in GRPO under sparse 0/1 rewards. When group rewards are identical, advantage standardization collapse, and the minibatch gradient vanishes on hard prompts. We propose \textsc{SAGE}, a privileged procedural hinting method that injects reference-solution-derived hints during training to shift the rollout distribution while preserving the original reward definition. A policy-dependent schedule gates hint strength based on detected group collapse, and inference uses the no-hint policy. Extensive experiments validate the improvements across tasks.

\section*{Acknowledgements}
This research was partly supported by the Netherlands Organization for Scientific Research (NWO) under project
number VI.C.192.080.

\section*{Impact Statements}
This work can reduce training cost and improve stability of RL for LLMs on verifiable tasks. Risks include misuse to optimize harmful verifiable objectives.

\bibliography{myrefs}

@article{hu2024openrlhf,
  title={OpenRLHF: An Easy-to-use, Scalable and High-performance RLHF Framework},
  author={Jian Hu and Xibin Wu and Zilin Zhu and Xianyu and Weixun Wang and Dehao Zhang and Yu Cao},
  journal={arXiv preprint arXiv:2405.11143},
  year={2024}
}

@inproceedings{sheng2025hybridflow,
  title={Hybridflow: A flexible and efficient rlhf framework},
  author={Sheng, Guangming and Zhang, Chi and Ye, Zilingfeng and Wu, Xibin and Zhang, Wang and Zhang, Ru and Peng, Yanghua and Lin, Haibin and Wu, Chuan},
  booktitle={Proceedings of the Twentieth European Conference on Computer Systems},
  pages={1279--1297},
  year={2025}
}

@article{wang2024mmlu,
  title={Mmlu-pro: A more robust and challenging multi-task language understanding benchmark},
  author={Wang, Yubo and Ma, Xueguang and Zhang, Ge and Ni, Yuansheng and Chandra, Abhranil and Guo, Shiguang and Ren, Weiming and Arulraj, Aaran and He, Xuan and Jiang, Ziyan and others},
  journal={Advances in Neural Information Processing Systems},
  volume={37},
  pages={95266--95290},
  year={2024}
}

@inproceedings{rein2024gpqa,
  title={Gpqa: A graduate-level google-proof q\&a benchmark},
  author={Rein, David and Hou, Betty Li and Stickland, Asa Cooper and Petty, Jackson and Pang, Richard Yuanzhe and Dirani, Julien and Michael, Julian and Bowman, Samuel R},
  booktitle={First Conference on Language Modeling},
  year={2024}
}

@misc{aime25,
  author       = {{MAA Committees}},
  title        = {{AIME Problems and Solutions}},
  howpublished = {\url{https://artofproblemsolving.com/wiki/index.php/AIME_Problems_and_Solutions}},
  year={2025},
}

@misc{aime24,
  author       = {{MAA Committees}},
  title        = {{AIME Problems and Solutions}},
  howpublished = {\url{https://artofproblemsolving.com/wiki/index.php/AIME_Problems_and_Solutions}},
  year={2024},
}

@article{li2024numinamath,
  title={Numinamath: The largest public dataset in ai4maths with 860k pairs of competition math problems and solutions},
  author={Li, Jia and Beeching, Edward and Tunstall, Lewis and Lipkin, Ben and Soletskyi, Roman and Huang, Shengyi and Rasul, Kashif and Yu, Longhui and Jiang, Albert Q and Shen, Ziju and others},
  journal={Hugging Face repository},
  volume={13},
  number={9},
  pages={9},
  year={2024}
}

@article{yang2025qwen3,
  title={Qwen3 technical report},
  author={Yang, An and Li, Anfeng and Yang, Baosong and Zhang, Beichen and Hui, Binyuan and Zheng, Bo and Yu, Bowen and Gao, Chang and Huang, Chengen and Lv, Chenxu and others},
  journal={arXiv preprint arXiv:2505.09388},
  year={2025}
}

@misc{openr1,
    title = {Open R1: A fully open reproduction of DeepSeek-R1},
    url = {https://github.com/huggingface/open-r1},
    author = {{Hugging Face}},
    month = {January},
    year = {2025}
}

@article{liao2025reward,
  title={Reward-guided speculative decoding for efficient llm reasoning},
  author={Liao, Baohao and Xu, Yuhui and Dong, Hanze and Li, Junnan and Monz, Christof and Savarese, Silvio and Sahoo, Doyen and Xiong, Caiming},
  journal={arXiv preprint arXiv:2501.19324},
  year={2025}
}

@software{Kydlicek_Math-Verify_Math_Verification,
author = {Kydlíček, Hynek},
license = {Apache-2.0},
title = {{Math-Verify: Math Verification Library}},
url = {https://github.com/huggingface/math-verify},
version = {0.6.1}
}

@article{yao2025optimizing,
  title={Optimizing Chain-of-Thought Reasoners via Gradient Variance Minimization in Rejection Sampling and RL},
  author={Yao, Jiarui and Hao, Yifan and Zhang, Hanning and Dong, Hanze and Xiong, Wei and Jiang, Nan and Zhang, Tong},
  journal={arXiv preprint arXiv:2505.02391},
  year={2025}
}

@article{xiong2025minimalist,
  title={A minimalist approach to llm reasoning: from rejection sampling to reinforce},
  author={Xiong, Wei and Yao, Jiarui and Xu, Yuhui and Pang, Bo and Wang, Lei and Sahoo, Doyen and Li, Junnan and Jiang, Nan and Zhang, Tong and Xiong, Caiming and others},
  journal={arXiv preprint arXiv:2504.11343},
  year={2025}
}

@misc{deepseekai2025deepseekr1incentivizingreasoningcapability,
      title={DeepSeek-R1: Incentivizing Reasoning Capability in LLMs via Reinforcement Learning}, 
      author={DeepSeek-AI and Daya Guo and Dejian Yang and Haowei Zhang and Junxiao Song and Ruoyu Zhang and Runxin Xu and Qihao Zhu and Shirong Ma and Peiyi Wang and Xiao Bi and Xiaokang Zhang and Xingkai Yu and Yu Wu and Z. F. Wu and Zhibin Gou and Zhihong Shao and Zhuoshu Li and Ziyi Gao and Aixin Liu and Bing Xue and Bingxuan Wang and Bochao Wu and Bei Feng and Chengda Lu and Chenggang Zhao and Chengqi Deng and Chenyu Zhang and Chong Ruan and Damai Dai and Deli Chen and Dongjie Ji and Erhang Li and Fangyun Lin and Fucong Dai and Fuli Luo and Guangbo Hao and Guanting Chen and Guowei Li and H. Zhang and Han Bao and Hanwei Xu and Haocheng Wang and Honghui Ding and Huajian Xin and Huazuo Gao and Hui Qu and Hui Li and Jianzhong Guo and Jiashi Li and Jiawei Wang and Jingchang Chen and Jingyang Yuan and Junjie Qiu and Junlong Li and J. L. Cai and Jiaqi Ni and Jian Liang and Jin Chen and Kai Dong and Kai Hu and Kaige Gao and Kang Guan and Kexin Huang and Kuai Yu and Lean Wang and Lecong Zhang and Liang Zhao and Litong Wang and Liyue Zhang and Lei Xu and Leyi Xia and Mingchuan Zhang and Minghua Zhang and Minghui Tang and Meng Li and Miaojun Wang and Mingming Li and Ning Tian and Panpan Huang and Peng Zhang and Qiancheng Wang and Qinyu Chen and Qiushi Du and Ruiqi Ge and Ruisong Zhang and Ruizhe Pan and Runji Wang and R. J. Chen and R. L. Jin and Ruyi Chen and Shanghao Lu and Shangyan Zhou and Shanhuang Chen and Shengfeng Ye and Shiyu Wang and Shuiping Yu and Shunfeng Zhou and Shuting Pan and S. S. Li and Shuang Zhou and Shaoqing Wu and Shengfeng Ye and Tao Yun and Tian Pei and Tianyu Sun and T. Wang and Wangding Zeng and Wanjia Zhao and Wen Liu and Wenfeng Liang and Wenjun Gao and Wenqin Yu and Wentao Zhang and W. L. Xiao and Wei An and Xiaodong Liu and Xiaohan Wang and Xiaokang Chen and Xiaotao Nie and Xin Cheng and Xin Liu and Xin Xie and Xingchao Liu and Xinyu Yang and Xinyuan Li and Xuecheng Su and Xuheng Lin and X. Q. Li and Xiangyue Jin and Xiaojin Shen and Xiaosha Chen and Xiaowen Sun and Xiaoxiang Wang and Xinnan Song and Xinyi Zhou and Xianzu Wang and Xinxia Shan and Y. K. Li and Y. Q. Wang and Y. X. Wei and Yang Zhang and Yanhong Xu and Yao Li and Yao Zhao and Yaofeng Sun and Yaohui Wang and Yi Yu and Yichao Zhang and Yifan Shi and Yiliang Xiong and Ying He and Yishi Piao and Yisong Wang and Yixuan Tan and Yiyang Ma and Yiyuan Liu and Yongqiang Guo and Yuan Ou and Yuduan Wang and Yue Gong and Yuheng Zou and Yujia He and Yunfan Xiong and Yuxiang Luo and Yuxiang You and Yuxuan Liu and Yuyang Zhou and Y. X. Zhu and Yanhong Xu and Yanping Huang and Yaohui Li and Yi Zheng and Yuchen Zhu and Yunxian Ma and Ying Tang and Yukun Zha and Yuting Yan and Z. Z. Ren and Zehui Ren and Zhangli Sha and Zhe Fu and Zhean Xu and Zhenda Xie and Zhengyan Zhang and Zhewen Hao and Zhicheng Ma and Zhigang Yan and Zhiyu Wu and Zihui Gu and Zijia Zhu and Zijun Liu and Zilin Li and Ziwei Xie and Ziyang Song and Zizheng Pan and Zhen Huang and Zhipeng Xu and Zhongyu Zhang and Zhen Zhang},
      year={2025},
      eprint={2501.12948},
      archivePrefix={arXiv},
      primaryClass={cs.CL},
      url={https://arxiv.org/abs/2501.12948}, 
}

@article{meta_llama3,
  title={Introducing Meta Llama 3: The most capable openly available LLM to date},
  author={Meta},
  journal={Meta AI Blog},
  year={2024},
  note={\url{https://ai.meta.com/blog/meta-llama-3/}}
}

@article{shi2024crucial,
  title={The crucial role of samplers in online direct preference optimization},
  author={Shi, Ruizhe and Zhou, Runlong and Du, Simon S},
  journal={arXiv preprint arXiv:2409.19605},
  year={2024}
}

@article{feng2025pilaf,
  title={Pilaf: Optimal human preference sampling for reward modeling},
  author={Feng, Yunzhen and Kwiatkowski, Ariel and Zheng, Kunhao and Kempe, Julia and Duan, Yaqi},
  journal={arXiv preprint arXiv:2502.04270},
  year={2025}
}

@article{zhang2024policy,
  title={Policy filtration in rlhf to fine-tune llm for code generation},
  author={Zhang, Chuheng and Shen, Wei and Zhao, Li and Zhang, Xuyun and Qi, Lianyong and Dou, Wanchun and Bian, Jiang}
}

@article{liu2025understanding,
  title={Understanding r1-zero-like training: A critical perspective},
  author={Liu, Zichen and Chen, Changyu and Li, Wenjun and Qi, Penghui and Pang, Tianyu and Du, Chao and Lee, Wee Sun and Lin, Min},
  journal={arXiv preprint arXiv:2503.20783},
  year={2025}
}

@article{he2024olympiadbench,
  title={Olympiadbench: A challenging benchmark for promoting agi with olympiad-level bilingual multimodal scientific problems},
  author={He, Chaoqun and Luo, Renjie and Bai, Yuzhuo and Hu, Shengding and Thai, Zhen Leng and Shen, Junhao and Hu, Jinyi and Han, Xu and Huang, Yujie and Zhang, Yuxiang and others},
  journal={arXiv preprint arXiv:2402.14008},
  year={2024}
}

@article{lewkowycz2022solving,
  title={Solving quantitative reasoning problems with language models},
  author={Lewkowycz, Aitor and Andreassen, Anders and Dohan, David and Dyer, Ethan and Michalewski, Henryk and Ramasesh, Vinay and Slone, Ambrose and Anil, Cem and Schlag, Imanol and Gutman-Solo, Theo and others},
  journal={Advances in Neural Information Processing Systems},
  volume={35},
  pages={3843--3857},
  year={2022}
}

@article{qwen2,
  title={Qwen2. 5 technical report},
  author={Yang, An and Yang, Baosong and Zhang, Beichen and Hui, Binyuan and Zheng, Bo and Yu, Bowen and Li, Chengyuan and Liu, Dayiheng and Huang, Fei and Wei, Haoran and others},
  journal={arXiv preprint arXiv:2412.15115},
  year={2024}
}

@article{shao2024deepseekmath,
  title={Deepseekmath: Pushing the limits of mathematical reasoning in open language models},
  author={Shao, Zhihong and Wang, Peiyi and Zhu, Qihao and Xu, Runxin and Song, Junxiao and Zhang, Mingchuan and Li, YK and Wu, Y and Guo, Daya},
  journal={arXiv preprint arXiv:2402.03300},
  year={2024}
}

@inproceedings{kwon2023efficient,
  title={Efficient Memory Management for Large Language Model Serving with PagedAttention},
  author={Woosuk Kwon and Zhuohan Li and Siyuan Zhuang and Ying Sheng and Lianmin Zheng and Cody Hao Yu and Joseph E. Gonzalez and Hao Zhang and Ion Stoica},
  booktitle={Proceedings of the ACM SIGOPS 29th Symposium on Operating Systems Principles},
  year={2023}
}

@article{schulman2017proximal,
  title={Proximal policy optimization algorithms},
  author={Schulman, John and Wolski, Filip and Dhariwal, Prafulla and Radford, Alec and Klimov, Oleg},
  journal={arXiv preprint arXiv:1707.06347},
  year={2017}
}

@article{
dong2023raft,
title={{RAFT}: Reward rAnked FineTuning for Generative Foundation Model Alignment},
author={Hanze Dong and Wei Xiong and Deepanshu Goyal and Yihan Zhang and Winnie Chow and Rui Pan and Shizhe Diao and Jipeng Zhang and KaShun SHUM and Tong Zhang},
journal={Transactions on Machine Learning Research},
issn={2835-8856},
year={2023},
url={https://openreview.net/forum?id=m7p5O7zblY},
note={}
}

@article{ouyang2022training,
  title={Training language models to follow instructions with human feedback},
  author={Ouyang, Long and Wu, Jeffrey and Jiang, Xu and Almeida, Diogo and Wainwright, Carroll and Mishkin, Pamela and Zhang, Chong and Agarwal, Sandhini and Slama, Katarina and Ray, Alex and others},
  journal={Advances in Neural Information Processing Systems},
  volume={35},
  pages={27730--27744},
  year={2022}
}

@inproceedings{xiong2023iterative,
  title={Iterative Preference Learning from Human Feedback: Bridging Theory and Practice for RLHF under KL-constraint},
  author={Xiong, Wei and Dong, Hanze and Ye, Chenlu and Wang, Ziqi and Zhong, Han and Ji, Heng and Jiang, Nan and Zhang, Tong},
  joural={ICLR 2024 Workshop on Mathematical and Empirical Understanding of Foundation Models},
    year={2023}
}

@article{hendrycks2021measuring,
  title={Measuring mathematical problem solving with the math dataset},
  author={Hendrycks, Dan and Burns, Collin and Kadavath, Saurav and Arora, Akul and Basart, Steven and Tang, Eric and Song, Dawn and Steinhardt, Jacob},
  journal={arXiv preprint arXiv:2103.03874},
  year={2021}
}

@article{dong2024rlhf,
  title={Rlhf workflow: From reward modeling to online rlhf},
  author={Dong, Hanze and Xiong, Wei and Pang, Bo and Wang, Haoxiang and Zhao, Han and Zhou, Yingbo and Jiang, Nan and Sahoo, Doyen and Xiong, Caiming and Zhang, Tong},
  journal={arXiv preprint arXiv:2405.07863},
  year={2024}
}

@article{yu2025dapo,
  title={Dapo: An open-source llm reinforcement learning system at scale},
  author={Yu, Qiying and Zhang, Zheng and Zhu, Ruofei and Yuan, Yufeng and Zuo, Xiaochen and Yue, Yu and Fan, Tiantian and Liu, Gaohong and Liu, Lingjun and Liu, Xin and others},
  journal={arXiv preprint arXiv:2503.14476},
  year={2025}
}

@inproceedings{ng1999policy,
  title={Policy invariance under reward transformations: Theory and application to reward shaping},
  author={Ng, Andrew Y and Harada, Daishi and Russell, Stuart},
  booktitle={Icml},
  volume={99},
  pages={278--287},
  year={1999},
  organization={Citeseer}
}

@book{szepesvari2022algorithms,
  title={Algorithms for reinforcement learning},
  author={Szepesv{\'a}ri, Csaba},
  year={2022},
  publisher={Springer nature}
}

@article{zhang2025stephint,
  title={StepHint: Multi-level Stepwise Hints Enhance Reinforcement Learning to Reason},
  author={Zhang, Kaiyi and Lv, Ang and Li, Jinpeng and Wang, Yongbo and Wang, Feng and Hu, Haoyuan and Yan, Rui},
  journal={arXiv preprint arXiv:2507.02841},
  year={2025}
}

@article{yan2025learning,
  title={Learning to reason under off-policy guidance},
  author={Yan, Jianhao and Li, Yafu and Hu, Zican and Wang, Zhi and Cui, Ganqu and Qu, Xiaoye and Cheng, Yu and Zhang, Yue},
  journal={arXiv preprint arXiv:2504.14945},
  year={2025}
}

@article{zhang2025scaf,
  title={Scaf-GRPO: Scaffolded Group Relative Policy Optimization for Enhancing LLM Reasoning},
  author={Zhang, Xichen and Wu, Sitong and Zhu, Yinghao and Tan, Haoru and Yu, Shaozuo and He, Ziyi and Jia, Jiaya},
  journal={arXiv preprint arXiv:2510.19807},
  year={2025}
}

@article{zhang2025improving,
  title={Improving sampling efficiency in rlvr through adaptive rollout and response reuse},
  author={Zhang, Yuheng and Yao, Wenlin and Yu, Changlong and Liu, Yao and Yin, Qingyu and Yin, Bing and Yun, Hyokun and Li, Lihong},
  journal={arXiv preprint arXiv:2509.25808},
  year={2025}
}

@article{li2025knapsack,
  title={Knapsack rl: Unlocking exploration of llms via optimizing budget allocation},
  author={Li, Ziniu and Chen, Congliang and Yang, Tianyun and Ding, Tian and Sun, Ruoyu and Zhang, Ge and Huang, Wenhao and Luo, Zhi-Quan},
  journal={arXiv preprint arXiv:2509.25849},
  year={2025}
}

@article{xiong2025reinforce,
  title={Reinforce-Ada: An Adaptive Sampling Framework under Non-linear RL Objectives},
  author={Xiong, Wei and Ye, Chenlu and Liao, Baohao and Dong, Hanze and Xu, Xinxing and Monz, Christof and Bian, Jiang and Jiang, Nan and Zhang, Tong},
  journal={arXiv preprint arXiv:2510.04996},
  year={2025}
}
\bibliographystyle{icml2026}

%%%%%%%%%%%%%%%%%%%%%%%%%%%%%%%%%%%%%%%%%%%%%%%%%%%%%%%%%%%%%%%%%%%%%%%%%%%%%%%
%%%%%%%%%%%%%%%%%%%%%%%%%%%%%%%%%%%%%%%%%%%%%%%%%%%%%%%%%%%%%%%%%%%%%%%%%%%%%%%
% APPENDIX
%%%%%%%%%%%%%%%%%%%%%%%%%%%%%%%%%%%%%%%%%%%%%%%%%%%%%%%%%%%%%%%%%%%%%%%%%%%%%%%
%%%%%%%%%%%%%%%%%%%%%%%%%%%%%%%%%%%%%%%%%%%%%%%%%%%%%%%%%%%%%%%%%%%%%%%%%%%%%%%
\newpage
\appendix

\counterwithin{figure}{section}
\counterwithin{table}{section}

\onecolumn

\section{Illustration of Privileged Hinting}
\label{app:case_study}

\begin{figure*}[th]
    \centering
        \includegraphics[width=\linewidth]{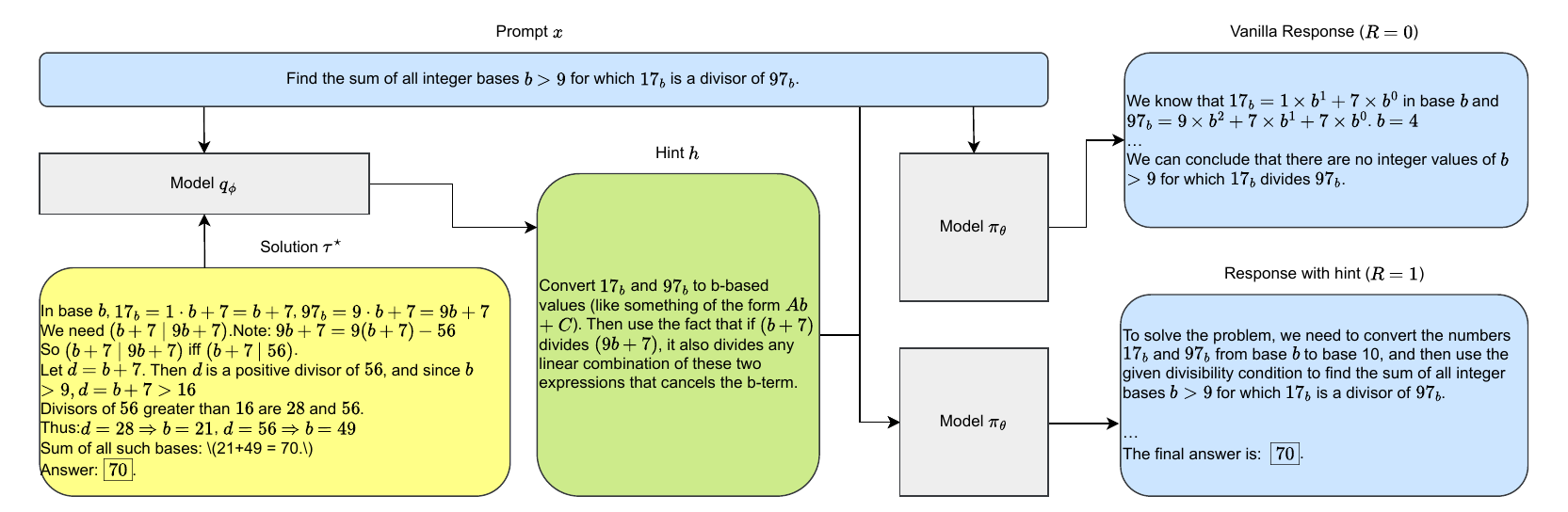}
  \caption{Example of privileged hinting for a single prompt.
Given a math prompt \(x\), a hint generator \(q_\phi\) uses the reference solution \(\tau^\star\) during training to produce a procedural hint (h) that summarizes intermediate reasoning without revealing the final answer. Rolling out the policy \(\pi_\theta\) on the original prompt can yield an incorrect solution with zero reward, while conditioning on \((x,h)\) shifts the rollout distribution and enables a correct solution with positive reward. The task return is unchanged, and at deployment the hint is removed so the model runs on the original prompt only.
}
  \label{fig:draw}
\end{figure*}

\begin{tcolorbox}[
  title=Case Study: Progressive privileged hints for $17_b \mid 97_b$,
  colback=white, colframe=black, boxrule=0.5mm
]
\small

\textbf{Prompt.} Find the sum of all integer bases $b>9$ for which $17_b$ is a divisor of $97_b$.

\medskip
\textbf{Reference solution (available only during training).}
In base $b$,
\[
17_b = 1\cdot b + 7 = b+7,\qquad
97_b = 9\cdot b + 7 = 9b+7.
\]
We need $b+7 \mid 9b+7$. Note that
\[
9b+7 = 9(b+7) - 56,
\]
hence $b+7 \mid 9b+7$ iff $b+7 \mid 56$.
Let $d=b+7$. Then $d$ is a positive divisor of $56$, and since $b>9$ we have $d=b+7>16$.
The divisors of $56$ greater than $16$ are $28$ and $56$, so $b\in\{21,49\}$ and the sum is $70$.

\medskip
\textbf{Privileged hints (used only during training; never shown at test time).}
\begin{itemize}
  \item \textbf{Level 1 (minimal).} Rewrite the base-$b$ numerals as ordinary integers in terms of $b$,
  then turn the divisibility condition into a statement about a simple linear expression.
  \item \textbf{Level 2 (medium).} Convert $17_b$ and $97_b$ into the form $Ab+C$.
  If $(b+7)$ divides $(9b+7)$, it also divides any linear combination of these expressions that cancels the $b$-term.
  \item \textbf{Level 3 (detailed).} Compute $17_b=b+7$ and $97_b=9b+7$.
  Subtract a multiple of $(b+7)$ from $(9b+7)$ to eliminate $b$, e.g.,
  $(9b+7)-9(b+7)$. The condition becomes ``$(b+7)$ divides a constant''. Enumerate divisors and keep only $b>9$.
\end{itemize}

\medskip
\textbf{Representative model behaviors (illustrative).}
\begin{itemize}
  \item \textbf{No hint ($h=\varnothing$):} the model may mis-expand $97_b$ (e.g., treating it as a multi-digit polynomial),
  and incorrectly conclude there is no valid $b>9$, yielding a wrong terminal decision and thus reward $0$. 
  \item \textbf{With stronger hints (Level 2/3):} the model is steered toward the key cancellation step
  $9b+7-9(b+7)=-56$, after which it can enumerate divisors and recover the correct bases and final sum ($70$). 
\end{itemize}

\end{tcolorbox}

\paragraph{Case Study: Privileged Hinting on a Simple Verifiable Math Task.}
To make the idea of \emph{privileged hinting} concrete, we include a small case study on a
verifiable divisibility question. The key point is that hints are \emph{training-time privileged context}:
they do not modify the verifier or the terminal reward. Instead, they reshape the rollout distribution
so that, under finite sampling, the policy is more likely to generate informative trajectories (e.g.,
those that perform the correct algebraic reduction). In practice, without hints the model may repeatedly
take an incorrect representation path (e.g., mis-expanding base-$b$ numerals) and receive identical
zero rewards across a rollout group, causing GRPO advantages to collapse. With progressive hints,
the model is guided toward the correct reduction, increasing the chance that a group contains mixed
outcomes and thus yields a non-degenerate update.

This example matches the operational role of \textsc{SAGE}: hints do not change the verifier reward,
but they increase the probability that at least one rollout in a finite group follows a \emph{useful}
trajectory (here, the cancellation-and-divisor-enumeration path). This increases within-group outcome
diversity, reduces the frequency of degenerate all-zero groups, and therefore prevents standardized
GRPO advantages from collapsing on hard prompts under finite sampling. 

\section{Prompt for hint generation and the usage of hint}
\label{app:prompt_for_hint}

\begin{figure}[H]
    \centering
    \small
    \begin{tcolorbox}[title=System prompt for hint generation, colback=white, colframe=black, boxrule=0.5mm]
    
    You are a tutoring assistant that generates progressive hints to help students solve difficult problems without revealing the solution directly.
    \medskip
    
    TASK:
    
    Given a question and its solution, generate 3 levels of hints that progressively guide the student toward solving the problem independently.
    
    \medskip
    
    HINT LEVELS:
    
    - Level 1: Minimal hint - Points to the key concept or approach without specifics
    
    - Level 2: Medium hint - Provides more direction on the method or intermediate steps
    
    - Level 3: Detailed hint - Gives substantial guidance while still requiring the student to complete the solution
    
    \medskip
    
    GUIDELINES:
    
    - Never reveal the final answer
    
    - Each level should be built on the previous one
    
    - Hints should inspire problem-solving, not just provide steps to copy
    
    - Tailor hint difficulty to bridge the gap between the student's level and the solution
    
    \medskip
    
    OUTPUT FORMAT:
    
    \begin{verbatim}
```json
{
  "level_1": "minimal hint text",
  "level_2": "medium hint text",
  "level_3": "detailed hint text"
}
```
    \end{verbatim}
    \end{tcolorbox}
    \label{fig:system_prompt_hint_gen}
\end{figure}

\begin{figure}[H]
    \centering
    \small
    \begin{tcolorbox}[title=User prompt for hint generation, colback=white, colframe=black, boxrule=0.5mm]
    
    Question:
    
    \{problem\}
    
    \medskip
    
    Solution:
    
    \{solution\}
    
    \end{tcolorbox}
    \label{fig:user_prompt_hint_gen}
\end{figure}

\begin{figure}[H]
    \centering
    \small
    \begin{tcolorbox}[title=System prompt for RL, colback=white, colframe=black, boxrule=0.5mm]
    
    Please reason step by step, and put your final answer within \textbackslash\textbackslash boxed\{\}.
    
    \end{tcolorbox}
    \label{fig:system_prompt_rl}
\end{figure}

\begin{figure}[H]
    \centering
    \small
    \begin{tcolorbox}[title=User prompt for RL, colback=white, colframe=black, boxrule=0.5mm]
    
    \{problem\}

    \medskip
    
    Here is a hint to help you:
    
    \{hint\}
    
    \end{tcolorbox}
    \label{fig:user_prompt_rl}
\end{figure}

\section{Detailed implementation settings}
\label{app:detailed_implementation_settings}
\subsection{Training settings}
\paragraph{SFT.} We use OpenRLHF \cite{hu2024openrlhf} for SFT, and set the learning rate as 5e-5, the batch size as 64, warmup ratio as 10\%, and number of epochs as 3. We evaluate on the final checkpoints.

\paragraph{GRPO.} GRPO shares the same training settings as SAGE, as stated in \S\ref{sec:empirical_results}.

\paragraph{LUFFY.} We use the open-source implementation\footnote{https://github.com/ElliottYan/LUFFY} to reproduce LUFFY, and set the batch size as 128, and \textit{ppo\_mini\_batch\_size} as 64. These two hyper-parameters stay the same for all RL methods.

\paragraph{Scaf-GRPO.} We use the open-source implementation\footnote{https://github.com/JIA-Lab-research/Scaf-GRPO} to reproduce Scaf-GRPO.

\subsection{Evaluation settings}
For the main results in Table \ref{tab:main_results}, we evaluate all models with a max response length of 8192, \textit{temperature=0.6} and \textit{top\_p=0.95}. For the rest, we set a max response length of 2048.

\begin{table*}[h!]
    \caption{Detailed number for Figure \ref{fig:poc_hints} (Left). By default, we train for 200 steps, but for 400 steps for methods denoted by $^*$. The results for level $l=2$ in Figure \ref{fig:poc_hints} are from methods denoted by $^*$. Different from the default training setting of SAGE, we set \textit{ppo\_mini\_batch\_size=32} here.}
    \label{tab:detailed_number_for_hint_levels}
    \vskip 0.1in
    \small
    \begin{center}
    \begin{tabular}{lcccccccc}
    \toprule
        \textbf{Method} & \textbf{Hint level} & \textbf{AIME24} & \textbf{AIME25} & \textbf{AMC23} & \textbf{MATH-500} & \textbf{Minerva} & \textbf{Olympiad} & \textbf{Avg.} \\
    \midrule
        Qwen3-4B-Instruct & - & 30.8 & 28.5 & 75.5 & 87.8 & 42.8 & 54.3 & 53.3 \\
        No hint & 0 & 34.0 & 29.2 & 78.8 & 88.6 & 40.7 & 53.1 & 54.1 \\
    \cmidrule(lr){1-9}
        GPT-5 hint & 1 & 34.0 & 29.8 & 83.0 & 89.6 & 42.2 & 56.9 & 55.9 \\
        Self-hint offline & 1 & 35.8 & 28.3 & 80.2 & 89.9 & 42.4 & 56.9 & 55.6  \\
        Self-hint online & 1 & 35.0 & 28.5 & 82.7 & 90.0 & 42.7 & 61.0 & 56.7 \\
    \cmidrule(lr){1-9}
        GPT-5 hint & 2 & 34.8 & 29.6 & 81.1 & 89.6 & 42.9 & 60.9 & 56.5 \\
        Self-hint offline & 2 & 33.1 & 27.3 & 78.1 & 89.0 & 42.5 & 59.2 & 54.9 \\
        Self-hint online & 2 & 34.2 & 27.7 & 81.7 & 89.2 & 43.5 & 61.8 & 56.4 \\
    \cmidrule(lr){1-9}
        GPT-5 hint$^{*}$ & 2 & 34.8 & 29.6 & 81.1 & 89.6 & 42.9 & 60.9 & 56.5 \\
        Self-hint offline$^{*}$ & 2 & 34.0 & 26.0 & 79.8 & 88.5 & 43.7 & 60.4 & 55.4 \\
        Self-hint online$^{*}$ & 2 & 38.5 & 30.8 & 82.7 & 90.3 & 44.4 & 62.9 & 58.3 \\
    \cmidrule(lr){1-9}
        GPT-5 hint & 3 & 37.9 & 26.9 & 78.3 & 89.2 & 43.1 & 62.1 & 56.3 \\
        Self-hint offline & 3 & 36.7 & 27.5 & 83.1 & 89.6 & 43.3 & 61.5 & 57.0 \\
        Self-hint online & 3 &  36.7 & 27.2 & 83.1 & 89.9 & 43.2 & 62.6 & 57.1 \\
    \bottomrule
    \end{tabular}
    \end{center}
    \vskip -0.1in
\end{table*}

%%%%%%%%%%%%%%%%%%%%%%%%%%%%%%%%%%%%%%%%%%%%%%%%%%%%%%%%%%%%%%%%%%%%%%%%%%%%%%%
\section{Proofs}
\label{app:proofs}

This appendix provides detailed proofs for the results in
Section~\ref{sec:analysis_sage} (and additional analysis).

\subsection{Preliminaries: Bernoulli groups and sample variance}
\label{app:prelim}

Fix a context $(x,h)$ and draw $G\ge2$ i.i.d.\ rollouts with terminal rewards
$R_i \in \{0,1\}$.
Let
\[
\bar R \coloneqq \frac{1}{G}\sum_{i=1}^G R_i,
\qquad
s^2 \coloneqq \frac{1}{G}\sum_{i=1}^G (R_i-\bar R)^2,
\qquad
s \coloneqq \sqrt{s^2}.
\]
When standardized GRPO is used, advantages are
\[
A_i \coloneqq \frac{R_i-\bar R}{s+\epsilon},
\]
where $\epsilon>0$ is a numerical stabilizer (as used in the main text).

We will repeatedly use the fact that since $R_i\in\{0,1\}$,
\begin{equation}
\label{eq:binary_sum_identity_app}
\sum_{i=1}^G (R_i-\bar R)^2
=
\sum_{i=1}^G R_i^2 - G\bar R^2
=
\sum_{i=1}^G R_i - G\bar R^2
=
G\bar R - G\bar R^2
=
G\bar R(1-\bar R),
\end{equation}
hence
\begin{equation}
\label{eq:s2_binary_app}
s^2=\bar R(1-\bar R).
\end{equation}
In particular, $s^2=0$ iff $\bar R\in\{0,1\}$, i.e., iff all $R_i$ are identical.

%%%%%%%%%%%%%%%%%%%%%%%%%%%%%%%%%%%%%%%%%%%%%%%%%%%%%%%%%%%%%%%%%%%%%%%%%%%%%%%
\subsection{Proof of Corollary~\ref{corl:gate_energy}}
\label{app:proof_gate_energy}

\begin{proof}
Recall the advantage energy
\[
E \coloneqq \frac{1}{G}\sum_{i=1}^G A_i^2,
\qquad
A_i=\frac{R_i-\bar R}{s+\epsilon}.
\]
For $\epsilon>0$, we compute
\[
E
=
\frac{1}{G}\sum_{i=1}^G
\frac{(R_i-\bar R)^2}{(s+\epsilon)^2}
=
\frac{1}{(s+\epsilon)^2}\cdot
\frac{1}{G}\sum_{i=1}^G (R_i-\bar R)^2
=
\frac{s^2}{(s+\epsilon)^2},
\]
which is exactly Eq.~\eqref{eq:energy_eps}.

Since $s\ge0$ and $\epsilon>0$, we have $0\le \frac{s}{s+\epsilon}<1$, hence
\[
0\le E=\left(\frac{s}{s+\epsilon}\right)^2<1,
\]
so $E\in[0,1)$ (and in the limit $\epsilon\to0^+$, $E\to \mathbb{I}[s>0]$).

Monotonicity in $s$ follows by differentiation: define
$f(s)\coloneqq \frac{s^2}{(s+\epsilon)^2}$ for $s\ge0$.
Then
\[
f'(s)=\frac{2s(s+\epsilon)^2 - s^2\cdot 2(s+\epsilon)}{(s+\epsilon)^4}
=\frac{2s\epsilon}{(s+\epsilon)^3}\ge0,
\]
so $E$ is non-decreasing in $s$.

Finally, if the group is degenerate, then $s=0$ and therefore $E=0$.
This shows the standardized signal energy collapses to $0$ exactly when within-group variance collapses.
\end{proof}

%%%%%%%%%%%%%%%%%%%%%%%%%%%%%%%%%%%%%%%%%%%%%%%%%%%%%%%%%%%%%%%%%%%%%%%%%%%%%%%
\subsection{Proof of Proposition~\ref{cor:gate_prob}}
\label{app:proof_gate_prob}

\begin{proof}
Write $p\coloneqq p_\theta(x,h)=\Pr[R(x,\tau)=1\mid x,h]$.
Then $R_1,\dots,R_G$ are i.i.d.\ Bernoulli$(p)$.

As noted in Section~\ref{app:prelim}, $s=0$ iff all rewards are identical.
There are exactly two degenerate cases:
(i) all-zero: $R_1=\cdots=R_G=0$; (ii) all-one: $R_1=\cdots=R_G=1$.
Thus
\[
\Pr[s>0\mid x,h]
=
1-\Pr[\text{all-zero}]-\Pr[\text{all-one}]
=
1-(1-p)^G - p^G,
\]
which proves Eq.~\eqref{eq:gate_prob}.

To locate the maximizer, define $u(p)\coloneqq 1-(1-p)^G-p^G$ on $[0,1]$.
We have symmetry $u(p)=u(1-p)$.
Moreover, for $G\ge2$,
\[
u''(p)
=
-\;G(G-1)\Big(p^{G-2}+(1-p)^{G-2}\Big) < 0,
\]
so $u$ is strictly concave, hence has a unique maximizer.
By symmetry, the unique maximizer must be at $p=\tfrac12$.

Finally, in the sparse regime $p\ll1$,
\[
u(p)=1-(1-p)^G-p^G
=
1-\Big(1-Gp+O(p^2)\Big)-O(p^G)
=
Gp+O(p^2),
\]
so $\Pr[s>0\mid x,h]\approx Gp$.
\end{proof}

%%%%%%%%%%%%%%%%%%%%%%%%%%%%%%%%%%%%%%%%%%%%%%%%%%%%%%%%%%%%%%%%%%%%%%%%%%%%%%%
\subsection{Proof of Proposition~\ref{prop:q_dep_theta}}
\label{app:proof_q_dep_theta}

\begin{proof}
Fix $x$ and $G\ge2$.
Recall $u(p)=1-(1-p)^G-p^G$ and
\[
J_x(\theta,q)
=
\mathbb{E}_{h\sim q(\cdot\mid x)}\Big[u\big(p_\theta(x,h)\big)\Big].
\]

We first verify the claims about $u$.
Symmetry: $u(1-p)=1-p^G-(1-p)^G=u(p)$.
For strict concavity, compute for $G\ge2$:
\[
u''(p)
=
-\;G(G-1)\Big(p^{G-2}+(1-p)^{G-2}\Big) < 0,
\]
so $u$ is strictly concave on $[0,1]$.
By symmetry and strict concavity, the unique maximizer is $p=\tfrac12$.

For a fixed $\theta$, define the measurable function
\[
v_\theta(h)\coloneqq u\!\left(p_\theta(x,h)\right)\in[0,1].
\]
Then
$
J_x(\theta,q)=\mathbb{E}_{h\sim q}[v_\theta(h)].
$
Over all probability distributions $q(\cdot\mid x)$ supported on the admissible hint space,
the maximizers must concentrate probability mass on (essential) maximizers of $v_\theta$:
indeed, since the objective is linear in $q$, any optimizer can be chosen to put all mass on
\[
\arg\max_h v_\theta(h)=\arg\max_h u\!\left(p_\theta(x,h)\right).
\]
Because $u$ is uniquely maximized at $p=\tfrac12$ and is strictly decreasing as $p$ moves away from $\tfrac12$
(due to strict concavity and symmetry), the maximizers of $v_\theta(h)$ are exactly the
\emph{calibrating hints} that make $p_\theta(x,h)$ as close as possible to $\tfrac12$
(and equal to $\tfrac12$ when achievable).
This proves the statement that an optimal $q_\theta^\star(\cdot\mid x)$ places its mass on
calibrating hints.

In general, $p_\theta(x,h)$ changes with $\theta$ because the rollout distribution under
$\pi_\theta(\cdot\mid x,h)$ changes with $\theta$.
Therefore the set of calibrating hints
\[
\mathcal{H}_\theta^\star(x)
\coloneqq
\arg\max_h u\!\left(p_\theta(x,h)\right)
\]
typically varies with $\theta$.
Unless $p_\theta(x,h)$ (hence $v_\theta(h)$) is invariant in $\theta$ for $q$-almost all $h$,
a fixed distribution $q$ that was optimal (or near-optimal) early in training will drift away from being optimal later,
reducing $J_x(\theta,q)$ relative to a $\theta$-adapted choice.
This formalizes why updating the hint generator online using the policy can reduce gate-mismatch.
\end{proof}

%%%%%%%%%%%%%%%%%%%%%%%%%%%%%%%%%%%%%%%%%%%%%%%%%%%%%%%%%%%%%%%%%%%%%%%%%%%%%%%
\subsection{Proof of the Jensen inequality in Remark~1}
\label{app:proof_jensen}

\begin{proof}
Fix $G\ge2$ and define $u(p)=1-(1-p)^G-p^G$.
We showed above that $u$ is concave on $[0,1]$ because $u''(p)\le0$.
Let $Z\coloneqq p_\theta(x,h)\in[0,1]$ be the random success probability induced by sampling $h\sim q$.
Then Jensen's inequality for concave $u$ gives
\[
\mathbb{E}_h\big[u(Z)\big]
\le
u\!\left(\mathbb{E}_h[Z]\right),
\]
which is exactly Eq.~\eqref{eq:jensen_gate}.
Equality holds only when $Z$ is almost surely constant (or when $u$ is affine on the support,
which does not happen for $G\ge2$ except in degenerate cases).
This shows that, at a fixed mean success probability, additional variability across hints can only decrease
the expected gate-opening frequency.
\end{proof}

%%%%%%%%%%%%%%%%%%%%%%%%%%%%%%%%%%%%%%%%%%%%%%%%%%%%%%%%%%%%%%%%%%%%%%%%%%%%%%%
\subsection{A sharper small-$p$ expansion of the gate probability}
\label{app:smallp}

For completeness, we also record an exact decomposition that makes the $Gp$ scaling explicit.
Let $p=\Pr[R=1\mid x,h]$.
Then
\begin{align}
\Pr[s>0\mid x,h]
&=
1-(1-p)^G-p^G \nonumber\\
&=
\sum_{k=1}^{G-1}\binom{G}{k}p^k(1-p)^{G-k}.
\label{eq:gate_binomial_decomp}
\end{align}
When $p\ll1$, the dominant term is $k=1$:
\[
\Pr[s>0\mid x,h]
=
G p (1-p)^{G-1} + O(p^2)
\approx Gp,
\]
and the neglected $p^G$ term is exponentially smaller in $G$.

%%%%%%%%%%%%%%%%%%%%%%%%%%%%%%%%%%%%%%%%%%%%%%%%%%%%%%%%%%%%%%%%%%%%%%%%%%%%%%%
\subsection{Non-standardized GRPO signal energy}
\label{app:nonstd_energy}

Non-standardized advantage is also widely used in GRPO-like algorithms \citep{liu2025understanding}, which also provide insights about the behavior.

\paragraph{Setup.}
Define non-standardized (mean-centered) advantages
\[
\tilde A_i\coloneqq R_i-\bar R,
\qquad
\bar R=\frac{1}{G}\sum_{i=1}^G R_i,
\]
and define the (non-standardized) energy
\[
\tilde E \coloneqq \frac{1}{G}\sum_{i=1}^G \tilde A_i^2.
\]
Note that $\tilde E=s^2$ by definition.

\begin{proposition}[Expected non-standardized energy under Bernoulli rewards]
\label{prop:nonstd_energy}
Conditioned on $(x,h)$ with success probability $p=p_\theta(x,h)$,
\begin{equation}
\mathbb{E}\big[\tilde E \mid x,h\big]
=
\frac{G-1}{G}\,p(1-p).
\label{eq:nonstd_energy}
\end{equation}
\end{proposition}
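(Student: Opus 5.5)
The plan is to reduce the energy to a function of the group mean and then take expectations with Bernoulli moments. By definition $\tilde E=s^2$. The binary identity \eqref{eq:s2_binary_app} from the preliminaries gives $s^2=\bar R(1-\bar R)=\bar R-\bar R^2$. It therefore suffices to compute $\mathbb{E}[\bar R]$ and $\mathbb{E}[\bar R^2]$ conditioned on $(x,h)$, with $R_1,\dots,R_G$ i.i.d.\ Bernoulli$(p)$ as in the proof of Proposition~\ref{cor:gate_prob}.

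The first moment is immediate: $\mathbb{E}[\bar R]=p$. For the second moment, write $G\bar R\sim\mathrm{Binomial}(G,p)$. Then $\mathbb{E}[\bar R^2]=\mathrm{Var}(\bar R)+p^2=\frac{p(1-p)}{G}+p^2$. Substituting,
\[
\mathbb{E}[\tilde E\mid x,h]=p-p^2-\frac{p(1-p)}{G}=\Bigl(1-\frac1G\Bigr)p(1-p),
\]
which is \eqref{eq:nonstd_energy}.

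As a cross-check, this is the standard fact that the biased (divide-by-$G$) sample variance has expectation $\frac{G-1}{G}$ times the population variance $p(1-p)$. One could also expand $\sum_i(R_i-\bar R)^2=\sum_i R_i^2-G\bar R^2$ directly and use $\mathbb{E}[R_iR_j]=p^2$ for $i\neq j$.

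There is no real obstacle here. The only point needing care is the variance of $\bar R$, which requires independence of the $G$ rollouts given $(x,h)$. This holds because all rollouts in a group are drawn i.i.d.\ from $\pi_\theta(\cdot\mid x,h)$ under a single shared hint, as the algorithm specifies.
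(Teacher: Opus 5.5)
Your proposal is correct and follows the paper's proof essentially step for step. Both reduce $\tilde E$ to $\bar R(1-\bar R)$ via the binary identity, then use $\mathbb{E}[\bar R]=p$ and $\mathbb{E}[\bar R^2]=\mathrm{Var}(\bar R)+p^2=\frac{p(1-p)}{G}+p^2$ from the binomial variance; your explicit remark that conditional independence of the $G$ rollouts given $(x,h)$ is what justifies $\mathrm{Var}(\bar R)=p(1-p)/G$ is a point the paper leaves implicit.
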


\begin{proof}
Let $S=\sum_{i=1}^G R_i$ so that $\bar R=S/G$.
Using identity~\eqref{eq:binary_sum_identity_app},
\[
\tilde E=\frac{1}{G}\sum_{i=1}^G (R_i-\bar R)^2
=
\bar R(1-\bar R).
\]
Taking expectation:
\[
\mathbb{E}[\tilde E]
=
\mathbb{E}[\bar R]-\mathbb{E}[\bar R^2].
\]
We have $\mathbb{E}[\bar R]=p$.
Also,
\[
\mathbb{E}[\bar R^2]
=
\mathrm{Var}(\bar R)+(\mathbb{E}[\bar R])^2
=
\frac{1}{G^2}\mathrm{Var}(S)+p^2
=
\frac{1}{G^2}\cdot Gp(1-p)+p^2
=
\frac{p(1-p)}{G}+p^2.
\]
Therefore
\[
\mathbb{E}[\tilde E]
=
p-\left(\frac{p(1-p)}{G}+p^2\right)
=
\frac{G-1}{G}\,p(1-p).
\]
\end{proof}

\begin{proposition}[Optimal calibrated difficulty for mean-centered updates]
\label{prop:nonstd_opt}
For fixed $G\ge2$, the right-hand side of Eq.~\eqref{eq:nonstd_energy} is uniquely maximized at $p=\tfrac12$.
\end{proposition}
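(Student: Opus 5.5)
By Proposition~\ref{prop:nonstd_energy}, the right-hand side of Eq.~\eqref{eq:nonstd_energy} is $g(p)\coloneqq \frac{G-1}{G}\,p(1-p)$ on $[0,1]$. For fixed $G\ge 2$ the prefactor $\frac{G-1}{G}$ is a strictly positive constant independent of $p$. So the maximizers of $g$ coincide exactly with those of $f(p)\coloneqq p(1-p)$, and it suffices to show that $f$ has the unique maximizer $p=\tfrac12$ on $[0,1]$.

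For this I would complete the square:
\[
f(p)=\tfrac14-\big(p-\tfrac12\big)^2 .
\]
Hence $f(p)\le \tfrac14$ for all $p\in[0,1]$, with equality iff $p=\tfrac12$. This gives existence and uniqueness of the maximizer in one step. As a cross-check matching the style of the proof of Proposition~\ref{cor:gate_prob}, I would also note two facts. First, $f$ is symmetric, $f(p)=f(1-p)$. Second, $f''(p)=-2<0$, so $f$ (and therefore $g$) is strictly concave and has at most one maximizer; by symmetry it must be at $p=\tfrac12$.

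Finally, I would record the maximal value $g(\tfrac12)=\frac{G-1}{4G}$. I would also note that $g$ vanishes at $p\in\{0,1\}$, which mirrors the collapse of the gate probability $u$ at the endpoints.

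The only place where the hypothesis $G\ge 2$ enters is positivity of the prefactor. For $G=1$ the expected energy is identically zero and the maximizer is not unique. I expect no real obstacle here; the only care needed is stating this role of $G\ge2$ explicitly so that uniqueness is justified.
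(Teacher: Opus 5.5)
Your proposal is correct and follows essentially the same route as the paper's proof. The paper also discards the prefactor $\frac{G-1}{G}$ and shows $f(p)=p(1-p)$ is uniquely maximized at $p=\tfrac12$, using $f'(p)=1-2p$ and $f''(p)=-2<0$. Your completing-the-square identity $f(p)=\tfrac14-\bigl(p-\tfrac12\bigr)^2$ reaches the same conclusion a bit more directly. Your explicit remark that $G\ge2$ is needed for the prefactor to be strictly positive, since uniqueness fails for $G=1$, is a detail the paper leaves implicit.
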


\begin{proof}
Let $f(p)=p(1-p)=p-p^2$.
Then $f'(p)=1-2p$ and $f''(p)=-2<0$, so $f$ is strictly concave and uniquely maximized at $p=\tfrac12$.
The prefactor $(G-1)/G$ does not affect the maximizer.
\end{proof}

\begin{remark}[Hint randomness incurs a variance penalty at fixed mean]
\label{rmk:var_penalty}
Let $Z=p_\theta(x,h)$ be random due to $h\sim q$ and $\bar p=\mathbb{E}[Z]$.
Then
\[
\mathbb{E}\big[Z(1-Z)\big]=\bar p(1-\bar p)-\mathrm{Var}(Z).
\]
Thus, at fixed mean success rate, additional variability in $p_\theta(x,h)$ across hints reduces the expected energy.
\end{remark}

\begin{proof}
Compute
\[
\mathbb{E}[Z(1-Z)]
=
\mathbb{E}[Z]-\mathbb{E}[Z^2]
=
\bar p-\left(\mathrm{Var}(Z)+\bar p^2\right)
=
\bar p(1-\bar p)-\mathrm{Var}(Z).
\]
\end{proof}

\end{document}